\definecolor{bgcolor}{rgb}{0.8,1,1}
\definecolor{bgcolor2}{rgb}{0.8,1,0.8}
\definecolor{niceblue}{rgb}{0.0,0.19,0.56}
\definecolor{shadecolor}{gray}{0.90}
\declaretheoremstyle[
  headfont=\normalfont\bfseries,
  notefont=\mdseries, notebraces={(}{)},
  bodyfont=\normalfont,
  postheadspace=0.5em,
  spaceabove=6pt,
  mdframed={
    skipabove=8pt,
    skipbelow=8pt,
    hidealllines=true,
    backgroundcolor={shadecolor},
    innerleftmargin=4pt,
    innerrightmargin=4pt}
]{shaded}
\declaretheorem[style=shaded,within=section]{definition}
\declaretheorem[style=shaded,sibling=definition]{theorem}
\declaretheorem[style=shaded,sibling=definition]{lemma}
\declaretheorem[style=shaded,sibling=definition]{remark}
\newcommand{\R}{\mathbb{R}}
\renewcommand{\O}{\mathcal{O}}
\newcommand{\E}[1]{\mathbb{E}\left[#1\right]}
\newcommand{\EE}[2]{\mathbb{E}_{#1}\left[#2\right]}
\newcommand{\mat}[1]{\bm{#1}}
\newcommand{\mD}{\mat{D}}
\newcommand{\mI}{\mat{I}}
\newcommand{\dotprod}[1]{\left\langle #1\right\rangle}
\newcommand{\norm}[1]{\left\| #1\right\|}
\newcommand{\cmark}{\ding{51}}
\newcommand{\xmark}{\ding{55}}
\crefname{alg}{algorithm}{algorithms}
\Crefname{alg}{Algorithm}{Algorithms}
\title{\textbf{Taking the Road Less Scheduled \\ with Adaptive Polyak Steps}}
\author{
    Dimitris Oikonomou\\
    Johns Hopkins University\\
    \texttt{doikono1@jh.edu} 
    \and 
    Matthew Buchholz\\
    University of British Columbia\\ 
    \texttt{mdbuch@cs.ubc.ca}
    \and 
    Yuen-Man Pun\\
    Australian National University\\
    \texttt{yuenman.pun@anu.edu.au}
    \bigskip
    \and 
    Robert M. Gower\\
    Center for Computational Mathematics \\ Flatiron Institute, Simons Foundation\\
    \texttt{gowerrobert@gmail.com} 
    \and 
    Nicolas Loizou\\
    Johns Hopkins University\\
    \texttt{nloizou@jhu.edu}
}
\begin{document}

\maketitle

\begin{abstract}
    Schedule-Free SGD, proposed in \emph{The Road Less Scheduled} \citep{DYK+24}, achieves optimal convergence rates without requiring the training horizon in advance, by replacing learning rate schedules with a principled form of iterate averaging. However, the method still requires tuning a base learning rate whose optimal value depends on unknown problem constants. In this work, we continue down this road by deriving Polyak-type step sizes for Schedule-Free SGD and Adam that compute the learning rate at each iteration from the sampled loss, gradient, and current iterates alone. We first propose an oracle variant that uses per-sample optimal function values and prove an $\mathcal{O}(1/\sqrt{t})$ anytime last-iterate rate for convex Lipschitz objectives. We then remove the oracle requirement with a safeguarded variant that replaces the unknown optimal values with any available lower bound, achieving the same rate up to a neighborhood that vanishes under interpolation. Both step sizes reduce to existing Polyak rules for standard SGD when momentum is set to zero, unifying standard and schedule-free Polyak methods. Numerical experiments on language modeling, including pretraining and distillation, show that the proposed methods match or surpass tuned Schedule-Free baselines while offering greater robustness to hyperparameter choices.
\end{abstract}

\section{Introduction}
\label{sec:intro}

We consider the stochastic minimization problem
\begin{align}
    \label{eq:main-prob}
    \min_{\bm{x}\in\R^d}\;f(\bm{x}):=\mathbb{E}_{\bm{\zeta}}\!\left[f_{\bm{\zeta}}(\bm{x})\right],
\end{align}
where each $f_{\bm{\zeta}}:\R^d\to\R$ is convex and differentiable. We assume that the set of minimizers $X^*:=\arg\min f$ is nonempty and denote $\bm{x}_\star\in X^*$, $f^\star=f(\bm{x}_\star)$. 

A central practical challenge in training machine learning models is the choice of the learning rate and its schedule. Standard practice relies on learning rate schedules, such as cosine annealing \citep{loshchilov2017sgdr}, linear decay~\citep{HBK+24}, or step-wise reduction~\citep{lecun-gradientbased-learning-applied-1998}, that describe how the rate evolves over a pre-specified training horizon $T$. While highly effective, these schedules share a fundamental limitation: they require the total number of training steps to be fixed in advance, and if training is stopped early or extended beyond $T$ the scheduler is suboptimal. Classical stochastic approximation theory offers a principled alternative through Polyak-Ruppert iterate averaging \citep{polyak1990new,ruppert1988efficient}, which achieves minimax-optimal convergence rates without knowledge of $T$. Despite this theoretical appeal, iterate averaging dramatically underperforms in practice \citep{DYK+24}, creating a long-standing gap between theory and application. Recent work has made significant progress toward closing this gap by developing methods that implicitly apply a schedule through a careful form of iterate averaging, thereby matching or surpassing the practical performance of schedules without requiring the stopping time to be known in advance. Schedule-Free SGD, proposed by \citet{DYK+24}, is a prominent example of this approach. 

Schedule-Free SGD maintains three sequences: a \emph{gradient evaluation} sequence $\bm{y}_t$, a \emph{base} sequence $\bm{z}_t$ where the gradient step is applied, and an \emph{iterate average} sequence $\bm{x}_t$ that is returned as output. At each iteration $t$, the update is
\begin{align}
    \label{eq:sf}
    \bm{y}_t&=(1-\beta)\bm{z}_{t-1}+\beta\bm{x}_t, \nonumber\\
    \bm{z}_t&=\bm{z}_{t-1}-\gamma_t\nabla f_{\bm{\zeta}_t}(\bm{y}_t), \tag{Schedule-Free SGD}\\
    \bm{x}_{t+1}&=(1-c_{t+1})\bm{x}_t+c_{t+1}\bm{z}_t, \nonumber
\end{align}
with $c_{t+1}=1/(t+1)$ and $\bm{z}_{-1}=\bm{x}_0$. The momentum parameter $\beta\in[0,1]$ interpolates between Polyak-Ruppert averaging ($\beta=0$) and Primal averaging ($\beta=1$); values near $\beta\approx0.9$ work well in practice \citep{DYK+24}. When each $f_{\bm{\zeta}}$ is $G$-Lipschitz, \ref{eq:sf} with a constant step size $\gamma_t\equiv\gamma$ achieves the optimal $\O(DG/\sqrt{T})$ rate for any $\beta\in[0,1)$ as shown in \citet{DYK+24}, where $D=\norm{\bm{x}_0-\bm{x}_\star}$. The method has demonstrated strong empirical performance across diverse benchmarks, including winning the MLCommons 2024 AlgoPerf Algorithmic Efficiency Challenge Self-Tuning track\footnote{Schedule-Free AdamW was the winning entry, see \url{https://mlcommons.org/2024/08/mlc-algoperf-benchmark-competition/}}. Recently, \citet{brown25} established convergence of \ref{eq:sf} in the smooth nonconvex setting for the primal averaging case ($\beta=1$) with a constant learning rate. The method is also closely related to the AC-SA algorithm of \citet{lan12}, which achieves optimal rates via a similar averaging scheme. For more additional related work, see \Cref{app:related-polyak}.

\begin{table}[h]
    \centering
    \caption{Summary of convergence guarantees for \ref{eq:sf} under different step size rules. All rates are for convex and Lipschitz objectives and refer to the last iterate $\bm{x}_t$. ``Oracle-free'' indicates that the step size does not require knowledge of $f_{\bm{\zeta}}(\bm{x}_\star)$.}
    \resizebox{\textwidth}{!}{%
    \begin{tabular}{lcccc}
        \toprule
        \textbf{Method} & \textbf{Step size $\gamma_t$} & \textbf{Oracle-free?} & \textbf{Anytime?} & \textbf{Rate} \\
        \midrule
        \ref{eq:sf} \citep{DYK+24} & Constant & \cmark & \xmark & $\O(1/\sqrt{T})$ \\
        \midrule
        \rowcolor{green!15}
        \ref{eq:sf} + \ref{eq:sf-ps} (\S\ref{sec:polyak-sf-oracle}) & \emph{Adaptive} & \xmark & \cmark & $\O(1/\sqrt{t})$ \\
        \rowcolor{green!15}
        \ref{eq:sf} + \ref{eq:sf-sps-safe} (\S\ref{sec:polyak-sf-safe}) & \emph{Adaptive} & \cmark & \cmark & $\O(1/\sqrt{t}+\sigma^2)$ \\
        \bottomrule
    \end{tabular}%
    }
    \label{tab:comparison}
\end{table}

Despite eliminating the dependence on the stopping time $T$, \ref{eq:sf} still requires tuning the base learning rate $\gamma$. The theoretically optimal choice $\gamma=D/(G\sqrt{T})$ depends on the unknown initial distance $D$, the Lipschitz constant $G$, and the very quantity $T$ that the method was designed to avoid. In practice, $\gamma$ is selected by grid search, which can be costly when combined with sweeps over weight decay and momentum. 

Instead of tuning $\gamma$, here we propose to replace the constant $\gamma$ with a \emph{Polyak-type} step size \citep{polyak1969minimization}, which calculates the learning rate at each iteration from function values rather than from problem constants. The classical Polyak step size and its stochastic extensions \citep{BZK20,LVLL21,garrigos2023function,oikonomou2025safeguarded} have proven effective for SGD, achieving optimal convergence rates with minimal tuning. In this work, we derive Polyak-type step sizes tailored to the \ref{eq:sf} update, making the method fully adaptive. See \Cref{tab:comparison} for a summary.
 
\subsection{Main Contributions}
\label{sec:main-contrib}

Our contributions are summarized as follows.

\paragraph{Polyak step sizes for \ref{eq:sf}.}
We derive Polyak-type step sizes for \ref{eq:sf} by minimizing an appropriate upper bound on the distance to the solution. The resulting step size is closed-form and extends the classical SPS$_+$ rule of \citet{garrigos2023function}. To remove its dependence on the per-sample optimal loss, we introduce a safeguarded variant following the strategy of \citet{oikonomou2025safeguarded}. In summary, we propose:
\begin{itemize}[leftmargin=*,itemsep=2pt]
    \item \textbf{Oracle (\ref{eq:sf-ps}).} Depends only on the sampled loss $f_{\bm{\zeta}_t}(\bm{y}_t)$, the gradient $\nabla f_{\bm{\zeta}_t}(\bm{y}_t)$, and the current iterates, but requires knowledge of the per-sample optimal loss $f_{\bm{\zeta}_t}(\bm{x}_\star)$.
    \item \textbf{Safeguarded (\ref{eq:sf-sps-safe}).} Replaces $f_{\bm{\zeta}_t}(\bm{x}_\star)$ with any available lower bound $\ell_{\bm{\zeta}_t}^*$ and prevents exploding steps via a $\max\{\norm{\nabla f_{\bm{\zeta}_t}(\bm{y}_t)}^2,M\}$ safeguard in the denominator. The resulting step size is fully adaptive: it requires no knowledge of the Lipschitz constant $G$, the initial distance $\norm{\bm{x}_0-\bm{x}_\star}$, or the stopping time $T$.
\end{itemize}
Setting $\beta=0$ in either variant recovers the corresponding SGD rule of \citet{garrigos2023function,oikonomou2025safeguarded}, thereby unifying standard and schedule-free Polyak methods within a single framework.

\paragraph{Convergence guarantees.}
For convex $G$-Lipschitz objectives, we establish \emph{last-iterate} convergence guarantees for both Polyak step size variants (\Cref{sec:convergence}). For the oracle step size (\ref{eq:sf-ps}), we prove an $\O(1/\sqrt{t})$ anytime rate that matches the optimum up to constants. For the safeguarded step size (\ref{eq:sf-sps-safe}), we prove $\O(1/\sqrt{t})$ convergence up to a neighborhood controlled by a variance measure $\sigma^2$; this neighborhood vanishes under interpolation. To the best of our knowledge, these are the first convergence guarantees for adaptive \ref{eq:sf} based on Polyak-type step sizes.

\paragraph{Practical variants.}
We extend the proposed Schedule-Free Polyak step sizes in two practical directions. First, we derive preconditioned Adam-based variants (\ref{eq:sf-adam-sps-plus} and \ref{eq:sf-adam-sps-safe}) by replacing the Euclidean gradient norm in the denominator with the norm induced by the inverse preconditioner $\mD_t^{-1}$. These variants recover the SGD rules when $\mD_t=\mI$. Second, we introduce an adaptive safeguard that replaces the fixed constant $M$ with an exponential moving average of squared gradient norms, reducing the need to tune this parameter. We evaluate these practical variants alongside the SGD-based methods in our experiments.

\paragraph{Numerical evaluation.} 
We complement our theoretical results with an empirical study across black-box model distillation, and language model pretraining. On model distillation, where the teacher's loss provides a natural approximation of the oracle value, we distill large language models into smaller students using the oracle step size (\ref{eq:sf-ps}). On language modeling, we pretrain a GPT-2 decoder on FineWeb 10B using the Adam-based safeguarded variant. Our results indicate that the proposed Polyak step sizes match or surpass tuned baselines in validation loss while substantially reducing sensitivity to the choice of learning rate, supporting the practical relevance of fully adaptive schedule-free optimization.

\section{Polyak Step Sizes for Schedule-Free SGD}
\label{sec:polyak-sf}

This section derives Polyak-type step sizes for \ref{eq:sf}. We begin by recalling two existing Polyak step size rules for SGD: the oracle-based SPS$_+$ of \citet{garrigos2023function} and the safeguarded variant \ref{eq:sps-safe} of \citet{oikonomou2025safeguarded}. We then show how both rules extend naturally to the Schedule-Free setting: first the oracle version (\Cref{sec:polyak-sf-oracle}), then the fully adaptive safeguarded version (\Cref{sec:polyak-sf-safe}). The correspondence between the SGD and Schedule-Free rules is summarized in \Cref{tab:correspondence}.  

\begin{table*}[h]
    \centering
    \caption{Correspondence between Polyak step sizes for classical SGD and \ref{eq:sf}. Here $\ell_{\bm{\zeta}_t}^*$ is any lower bound on $f_{\bm{\zeta}_t}$ and $M>0$ is the safeguard constant. Setting $\beta=0$ in the \ref{eq:sf} column recovers the corresponding SGD rule, since $\bm{y}_t=\bm{z}_{t-1}$.}
    \resizebox{\textwidth}{!}{%
    \begin{tabular}{l|cc}
        \toprule
        \textbf{Setting} & \textbf{SGD} & \textbf{\ref{eq:sf}} \\
        \midrule
        Oracle 
        & $\displaystyle \gamma_t=\frac{\left[f_{\bm{\zeta}_t}(\bm{x}^t)-f_{\bm{\zeta}_t}(\bm{x}_\star)\right]_+}{\norm{\nabla f_{\bm{\zeta}_t}(\bm{x}^t)}^2}$ 
        & $\displaystyle \gamma_t=\frac{\left[f_{\bm{\zeta}_t}(\bm{y}_t)-f_{\bm{\zeta}_t}(\bm{x}_\star)+\beta\dotprod{\nabla f_{\bm{\zeta}_t}(\bm{y}_t),\bm{z}_{t-1}-\bm{x}_t}\right]_+}{\norm{\nabla f_{\bm{\zeta}_t}(\bm{y}_t)}^2}$ \\[14pt]
        Safeguarded 
        & $\displaystyle \gamma_t=\frac{f_{\bm{\zeta}_t}(\bm{x}^t)-\ell_{\bm{\zeta}_t}^*}{\max\{\norm{\nabla f_{\bm{\zeta}_t}(\bm{x}^t)}^2,M\}}$ 
        & $\displaystyle \gamma_t=\frac{\left[f_{\bm{\zeta}_t}(\bm{y}_t)-\ell_{\bm{\zeta}_t}^*+\beta\dotprod{\nabla f_{\bm{\zeta}_t}(\bm{y}_t),\bm{z}_{t-1}-\bm{x}_t}\right]_+}{\max\{\norm{\nabla f_{\bm{\zeta}_t}(\bm{y}_t)}^2,M\}}$ \\
        \bottomrule
    \end{tabular}
    }
    \label{tab:correspondence}
\end{table*}

\subsection{Background: From SPS$_+$ to SPS$_{\text{safe}}$}
\label{sec:polyak-background}

A classical strategy for choosing the step size in SGD, given by $\bm{x}^{t+1}=\bm{x}^t-\gamma_t\nabla f_{\bm{\zeta}_t}(\bm{x}^t)$, is to select $\gamma_t$ that minimizes an upper bound on the quantity $\|\bm{x}^{t+1}-\bm{x}_\star\|^2$. When $f_{\bm{\zeta}_t}$ is convex, expanding the squared distance and bounding the inner product via convexity yields a quadratic in $\gamma_t$ whose minimizer over $\gamma_t\geq0$ is given by the SPS$_+$ rule, as introduced in \citet{garrigos2023function}:
\begin{align}
    \label{eq:sps-plus}
    \gamma_t=\frac{\left[f_{\bm{\zeta}_t}(\bm{x}^t)-f_{\bm{\zeta}_t}(\bm{x}_\star)\right]_+}{\norm{\nabla f_{\bm{\zeta}_t}(\bm{x}^t)}^2},
    \tag{SPS$_+$}
\end{align}
where $[z]_+=\max\{z,0\}$ is the ReLU function, and guarantees that $\gamma_t\geq0$. Under this rule, the iterates are Fej\'er monotone with respect to $\bm{x}_\star$, and for convex $G$-Lipschitz objectives \ref{eq:sps-plus} achieves the optimal $\mathcal{O}(G\norm{\bm{x}_0-\bm{x}_\star}/\sqrt{T})$ rate without requiring interpolation \citep{garrigos2023function}. The price is that \ref{eq:sps-plus} requires oracle access to $f_{\bm{\zeta}_t}(\bm{x}_\star)$, which is rarely available in practice. Earlier variants such as SPS$_{\max}$ of \citet{LVLL21} sidestep this by using the infima $f_{\bm{\zeta}_t}^*:=\inf f_{\bm{\zeta}_t}$, but then require the interpolation condition $f_{\bm{\zeta}}(\bm{x}_\star)=\inf_{\bm{x}}f_{\bm{\zeta}}(\bm{x})$ for all $\bm{\zeta}$, in order to establish convergence guarantees in the $G$-Lipschitz setting.

To remove both the oracle dependence and the interpolation requirement, \citet{oikonomou2025safeguarded} introduced the safeguarded SPS:
\begin{align}
    \label{eq:sps-safe}
    \gamma_t=\frac{f_{\bm{\zeta}_t}(\bm{x}^t)-\ell_{\bm{\zeta}_t}^*}{\max\{\norm{\nabla f_{\bm{\zeta}_t}(\bm{x}^t)}^2,M\}},
    \tag{SPS$_{\text{safe}}$}
\end{align}
where $\ell_{\bm{\zeta}_t}^*\leq f_{\bm{\zeta}_t}(\bm{x}_\star)$ is any available lower bound (typically $\ell_{\bm{\zeta}_t}^*=0$ for non-negative losses) and $M>0$ is a safeguard constant. Two things change relative to \ref{eq:sps-plus}: the unknown $f_{\bm{\zeta}_t}(\bm{x}_\star)$ is replaced by the accessible $\ell_{\bm{\zeta}_t}^*$, and $\max\{\cdot,M\}$ in the denominator prevents the step size from exploding when $\norm{\nabla f_{\bm{\zeta}_t}(\bm{x}^t)}$ is small. The result is $\mathcal{O}(1/\sqrt{t})$ convergence to a neighborhood of the solution, with the neighborhood vanishing under interpolation \citep{oikonomou2025safeguarded}. We now extend both \ref{eq:sps-plus} and \ref{eq:sps-safe} to \ref{eq:sf}.

\subsection{Oracle Polyak Step Size for \ref{eq:sf}}
\label{sec:polyak-sf-oracle}

We now carry out the same upper-bound argument for the Schedule-Free $\bm{z}$ sequence. Recall that the Schedule-Free update sets $\bm{z}_t=\bm{z}_{t-1}-\gamma_t\nabla f_{\bm{\zeta}_t}(\bm{y}_t)$. Expanding the squared distance to $\bm{x}_\star$ gives
\begin{align}
    \label{eq:expand}
    \norm{\bm{z}_t-\bm{x}_\star}^2
    =\norm{\bm{z}_{t-1}-\bm{x}_\star}^2-2\gamma_t\dotprod{\nabla f_{\bm{\zeta}_t}(\bm{y}_t),\bm{z}_{t-1}-\bm{x}_\star}+\gamma_t^2\norm{\nabla f_{\bm{\zeta}_t}(\bm{y}_t)}^2.
\end{align}
The key difference from the standard subgradient setting is that the gradient is evaluated at $\bm{y}_t$, not at $\bm{z}_{t-1}$. At this point, we cannot minimize the right-hand side in $\gamma_t$, because the minima would then depend explicitly on $\bm{x}_\star$ which appears in the inner product. To remove this explicit dependency on $\bm{x}_\star$, we will use convexity to bound this inner product. 

To apply convexity, we need to decompose $\bm{z}_{t-1}-\bm{x}_\star$ in terms of $\bm{y}_t$. Rearranging the interpolation update $\bm{y}_t=(1-\beta)\bm{z}_{t-1}+\beta\bm{x}_t$ gives
\begin{align}
    \label{eq:zt-yt}
    \bm{z}_{t-1}= \bm{y}_t+\beta (\bm{z}_{t-1}-\bm{x}_t)
\end{align}
which substituting into the inner product yields 
\begin{align*}
    \dotprod{\nabla f_{\bm{\zeta}_t}(\bm{y}_t),\bm{z}_{t-1}-\bm{x}_\star}
    =\dotprod{\nabla f_{\bm{\zeta}_t}(\bm{y}_t),\bm{y}_t-\bm{x}_\star}+\beta\dotprod{\nabla f_{\bm{\zeta}_t}(\bm{y}_t),\bm{z}_{t-1}-\bm{x}_t}.
\end{align*}
Now convexity of $f_{\bm{\zeta}_t}$ can be applied to the first of the above two terms on the right giving
$$\dotprod{\nabla f_{\bm{\zeta}_t}(\bm{y}_t),\bm{y}_t-\bm{x}_\star}\geq f_{\bm{\zeta}_t}(\bm{y}_t)-f_{\bm{\zeta}_t}(\bm{x}_\star).$$
Substituting back into \eqref{eq:expand} produces the upper bound
\begin{align}
    \label{eq:upper-bound}
    \norm{\bm{z}_t-\bm{x}_\star}^2
    \leq\norm{\bm{z}_{t-1}-\bm{x}_\star}^2+\gamma_t^2\norm{\nabla f_{\bm{\zeta}_t}(\bm{y}_t)}^2-2\gamma_t\left(f_{\bm{\zeta}_t}(\bm{y}_t)-f_{\bm{\zeta}_t}(\bm{x}_\star) +\beta\dotprod{\nabla f_{\bm{\zeta}_t}(\bm{y}_t),\bm{z}_{t-1}-\bm{x}_t}\right).
\end{align}
Minimizing the right-hand side over $\gamma_t\geq0$ gives the \emph{Schedule-Free Polyak Step}:
\begin{align}
    \label{eq:sf-ps}
    \boxed{\gamma_t=\frac{\left[f_{\bm{\zeta}_t}(\bm{y}_t)-f_{\bm{\zeta}_t}(\bm{x}_\star)+\beta\dotprod{\nabla f_{\bm{\zeta}_t}(\bm{y}_t),\bm{z}_{t-1}-\bm{x}_t}\right]_+}{\norm{\nabla f_{\bm{\zeta}_t}(\bm{y}_t)}^2}}
    \tag{SF-SPS$_+$}
\end{align}
The ReLU function $[\cdot]_+$ guarantees $\gamma_t\geq0$; it is active only when the momentum correction $\beta\dotprod{\nabla f_{\bm{\zeta}_t}(\bm{y}_t),\bm{z}_{t-1}-\bm{x}_t}$ is sufficiently negative to overcome $f_{\bm{\zeta}_t}(\bm{y}_t)-f_{\bm{\zeta}_t}(\bm{x}_\star)$. In the setting where $\beta=0$, \ref{eq:sf-ps} becomes \ref{eq:sps-plus}, since in that case $\bm{y}_t=\bm{z}_{t-1}$ and the inner product term vanishes. 

\ref{eq:sf-ps} still requires the oracle value $f_{\bm{\zeta}_t}(\bm{x}_\star)$; we next introduce a safeguarded variant that replaces it with a lower bound. The complete method for both variants is presented in \Cref{alg:sf-sps}.

\subsection{Safeguarded Polyak Step Size for \ref{eq:sf}}
\label{sec:polyak-sf-safe}

Following the same strategy that led from \ref{eq:sps-plus} to \ref{eq:sps-safe} in the standard subgradient setting, we apply the same two modifications to remove the oracle dependence of \ref{eq:sf-ps}: (i) the unknown $f_{\bm{\zeta}_t}(\bm{x}_\star)$ is replaced by a lower bound $\ell_{\bm{\zeta}_t}^*\leq f_{\bm{\zeta}_t}(\bm{x}_\star)$, and (ii) the denominator $\norm{\nabla f_{\bm{\zeta}_t}(\bm{y}_t)}^2$ is replaced by $\max\{\norm{\nabla f_{\bm{\zeta}_t}(\bm{y}_t)}^2,M\}$ for a user-chosen safeguard $M>0$. This yields the \emph{Safeguarded Schedule-Free Polyak Step}:
\begin{align}
    \label{eq:sf-sps-safe}
    \boxed{\gamma_t=\frac{\left[f_{\bm{\zeta}_t}(\bm{y}_t)-\ell_{\bm{\zeta}_t}^*+\beta\dotprod{\nabla f_{\bm{\zeta}_t}(\bm{y}_t),\bm{z}_{t-1}-\bm{x}_t}\right]_+}{\max\{\norm{\nabla f_{\bm{\zeta}_t}(\bm{y}_t)}^2,M\}}}
    \tag{SF-SPS$_{\text{safe}}$}
\end{align}
The complete method is presented in \Cref{alg:sf-sps}.

\begin{algorithm}[h]
\begin{algorithmic}[1]
    \caption{\ref{eq:sf} with \ref{eq:sf-ps} or \ref{eq:sf-sps-safe}}
    \label{alg:sf-sps}
    \State \textbf{Input:} $\bm{z}_{-1}=\bm{x}_0\in\R^d$, $\beta\in[0,1)$, lower bounds $\ell_{\bm{\zeta}_t}^*$, safeguard $M>0$
    \For{$t=0$ \textbf{to} $T-1$}
        \State $\bm{y}_t=(1-\beta)\bm{z}_{t-1}+\beta\bm{x}_t$
        \State Sample $\bm{\zeta}_t$; compute $\nabla f_{\bm{\zeta}_t}(\bm{y}_t)$
        \State \textbf{Option I:} $\displaystyle\gamma_t=   \frac{\left[f_{\bm{\zeta}_t}(\bm{y}_t)-f_{\bm{\zeta}_t}(\bm{x}_\star)+\beta\dotprod{\nabla f_{\bm{\zeta}_t}(\bm{y}_t),\bm{z}_{t-1}-\bm{x}_t}\right]_+}{\norm{\nabla f_{\bm{\zeta}_t}(\bm{y}_t)}^2}  $ \hfill $\triangleright$ \ref{eq:sf-ps}
        \Statex \hspace{\algorithmicindent}\textbf{or}
        \State \textbf{Option II:} $\displaystyle\gamma_t=\frac{\left[f_{\bm{\zeta}_t}(\bm{y}_t)-\ell_{\bm{\zeta}_t}^*+\beta\dotprod{\nabla f_{\bm{\zeta}_t}(\bm{y}_t),\bm{z}_{t-1}-\bm{x}_t}\right]_+}{\max\{\norm{\nabla f_{\bm{\zeta}_t}(\bm{y}_t)}^2,M\}}$ \hfill $\triangleright$ \ref{eq:sf-sps-safe}
        \State $\bm{z}_t=\bm{z}_{t-1}-\gamma_t\nabla f_{\bm{\zeta}_t}(\bm{y}_t)$
        \State $\bm{x}_{t+1}=(1-c_{t+1})\bm{x}_t+c_{t+1}\bm{z}_t$, \quad $c_{t+1}=\frac{1}{t+1}$
    \EndFor
    \State \textbf{Return:} $\bm{x}_T$
\end{algorithmic}
\end{algorithm}

As in the oracle version, when $\beta>0$, the inner product term $\beta\dotprod{\nabla f_{\bm{\zeta}_t}(\bm{y}_t),\bm{z}_{t-1}-\bm{x}_t}$ can be negative, potentially making the numerator negative even though $f_{\bm{\zeta}_t}(\bm{y}_t)-\ell_{\bm{\zeta}_t}^*\geq0$; the $[\cdot]_+$ operator handles this by clamping the step size to zero. In practice, however, this rarely occurs.

When $\beta=0$, we have $\bm{y}_t=\bm{z}_{t-1}$ and the momentum correction vanishes, so \ref{eq:sf-sps-safe} reduces to $[f_{\bm{\zeta}_t}(\bm{z}_{t-1})-\ell_{\bm{\zeta}_t}^*]_+/\allowbreak\max\{\norm{\nabla f_{\bm{\zeta}_t}(\bm{z}_{t-1})}^2,M\}$. Since $f_{\bm{\zeta}_t}(\bm{z}_{t-1})-\ell_{\bm{\zeta}_t}^*\geq0$, the ReLU function is redundant and we recover exactly \ref{eq:sps-safe} applied to the $\bm{z}$ sequence, with $\bm{x}_t$ serving as its Polyak-Ruppert average. 

Meanwhile, $\max\{\norm{\nabla f_{\bm{\zeta}_t}(\bm{y}_t)}^2,M\}$ in the denominator prevents the step size from growing unboundedly when the gradient is small. Unlike the SPS$_{\max}$ approach of \citet{LVLL21}, which clips the \emph{entire} Polyak step size via $\min\{\gamma_t,\gamma_b\}$ and can reduce the method to a near-constant step size in practice \citep{oikonomou2025safeguarded}, our safeguard controls only the denominator and preserves the adaptive numerator throughout training.

Altogether, \ref{eq:sf-sps-safe} depends only on quantities available at iteration $t$, the sampled loss $f_{\bm{\zeta}_t}(\bm{y}_t)$, a lower bound $\ell_{\bm{\zeta}_t}^*$, the gradient $\nabla f_{\bm{\zeta}_t}(\bm{y}_t)$, and the current iterates, and requires no knowledge of the Lipschitz constant $G$, the initial distance $\norm{\bm{x}_0-\bm{x}_\star}$, or the stopping time $T$.

\section{Convergence Analysis}
\label{sec:convergence}
 
This section presents convergence guarantees for \ref{eq:sf} equipped with the Polyak step sizes derived in \Cref{sec:polyak-sf}. We first analyze the oracle step size \ref{eq:sf-ps}, which yields a clean $\O(1/\sqrt{t})$ last-iterate rate but requires knowledge of $f_{\bm{\zeta}_t}(\bm{x}_\star)$ (\Cref{sec:conv-oracle}). We then turn to the safeguarded step size \ref{eq:sf-sps-safe}, which removes this oracle dependence at the cost of convergence to a neighborhood (\Cref{sec:conv-safe}). Complete proofs are deferred to the appendix.

\subsection{Convergence with Oracle Step Size}
\label{sec:conv-oracle}

\begin{theorem}
    \label{thm:oracle}
    Consider the iterates of \ref{eq:sf} with the oracle step size \ref{eq:sf-ps} and $c_t=1/(t+1)$. Let $\beta\in[0,1)$ and let $f_{\bm{\zeta}}:\R^d\to\R$ be convex and differentiable for every $\bm{\zeta}$. Define
    \begin{align*}
        B & :=\{\bm{x}\in\R^d:\norm{\bm{x}-\bm{x}_\star}\leq\norm{\bm{x}_0-\bm{x}_\star}\} \\
        G^2& :=\max_{\bm{x}\in B}\;\E{\norm{\nabla f_{\bm{\zeta}}(\bm{x})}^2}.
    \end{align*}
    With the initialization $\bm{z}_{-1}=\bm{x}_0$, the last iterate satisfies, for all $t\geq1$,
    \begin{align}
        \label{eq:oracle-rate}
        \E{f(\bm{x}_t)-f(\bm{x}_\star)}\leq\frac{G\norm{\bm{x}_0-\bm{x}_\star}}{\sqrt{t+1}}.
    \end{align}
\end{theorem}

The bound \eqref{eq:oracle-rate} has two notable features. First, it is an \emph{anytime} result: it achieves the $\O(1/\sqrt{t})$ rate at every iteration $t$, without requiring the stopping time $T$ to be specified in advance. This is in contrast to the constant-step analysis of \citet{DYK+24}, where the optimal choice $\gamma=D/(G\sqrt{T})$ depends on $T$. Second, the constant $G$ is defined locally over the ball $B$ rather than globally. The proof establishes that the iterates $\bm{z}_t$ are Fej\'er monotone with respect to $\bm{x}_\star$, i.e., $\norm{\bm{z}_t-\bm{x}_\star}\leq\norm{\bm{x}_0-\bm{x}_\star}$ for all $t$. Since $\bm{x}_t$ is a convex combination of the $\bm{z}$ iterates and $\bm{y}_t$ is a convex combination of $\bm{z}_{t-1}$ and $\bm{x}_t$, all three sequences remain in $B$, and consequently the gradients evaluated during training satisfy $\E{\norm{\nabla f_{\bm{\zeta}}(\bm{y}_t)}^2}\leq G^2$.

The rate \eqref{eq:oracle-rate} matches, up to a constant factor, the optimal rate $\O(DG/\sqrt{T})$ for convex Lipschitz optimization \citep{orabona2019modern}. The practical limitation is that \ref{eq:sf-ps} requires oracle access to $f_{\bm{\zeta}_t}(\bm{x}_\star)$. While this quantity is available in some settings, for instance, under interpolation where $f_{\bm{\zeta}}(\bm{x}_\star)=\inf_{\bm{x}}f_{\bm{\zeta}}(\bm{x})$, or in black-box model distillation, it is unavailable in general. This motivates the safeguarded analysis below, which replaces the oracle value with a lower bound at the cost of convergence to a neighborhood.

\subsection{Convergence with Safeguarded Step Size}
\label{sec:conv-safe}

The gap between the lower bound $\ell_{\bm{\zeta}_t}^*$ and the true optimal value $f_{\bm{\zeta}_t}(\bm{x}_\star)$ introduces a noise term that prevents exact convergence. Following the stochastic Polyak step size literature \citep{LVLL21,oikonomou2025safeguarded}, we quantify this gap via the variance measure
\begin{align}
    \label{eq:variance}
    \sigma^2:=\left(\E{\left(f_{\bm{\zeta}_t}(\bm{x}_\star)-\ell_{\bm{\zeta}_t}^*\right)^2}\right)^{1/2}.
\end{align}
Since each $f_{\bm{\zeta}}$ is lower bounded, $\sigma^2<\infty$. When the problem is \emph{interpolated}, i.e., there exists $\bm{x}_\star$ such that $f_{\bm{\zeta}}(\bm{x}_\star)=\inf_{\bm{x}}f_{\bm{\zeta}}(\bm{x})$ for all $\bm{\zeta}$, choosing $\ell_{\bm{\zeta}}^*=\inf_{\bm{x}}f_{\bm{\zeta}}(\bm{x})$ gives $\sigma^2=0$. The interpolation condition is satisfied in many over-parameterized settings \citep{ma2018power,zhang2021understanding}

\begin{theorem}
    \label{thm:safe}
    Consider the iterates of \ref{eq:sf} with the safeguarded step size \ref{eq:sf-sps-safe} and $c_t=1/(t+1)$. Let $f_{\bm{\zeta}}:\R^d\to\R$ be convex, differentiable, and $G$-Lipschitz for every $\bm{\zeta}$. With the initialization $\bm{z}_{-1}=\bm{x}_0$, the last iterate satisfies, for all $t\geq1$,
    \begin{align}
        \E{f(\bm{x}_t)-f(\bm{x}_\star)}+\frac{\beta}{1-\beta}\frac{1}{t+1}\sum_{k=0}^{t}\E{B_f(\bm{x}_k,\bm{y}_k)}\leq\frac{\sqrt{\max\{G^2,M\}}\,\norm{\bm{x}_0-\bm{x}_\star}}{\sqrt{t}}+\sqrt{\frac{\max\{G^2,M\}}{M}}\,\sigma^2,
        \label{eq:safe-rate}
    \end{align}
    where $B_f(\bm{x},\bm{y}):=f(\bm{x})-f(\bm{y})-\dotprod{\nabla f(\bm{y}),\bm{x}-\bm{y}}\geq0$ is the Bregman divergence of $f$.
\end{theorem}

\Cref{thm:safe} provides the same $\O(1/\sqrt{t})$ leading rate as \Cref{thm:oracle}, up to a neighborhood whose size is controlled by $\sigma^2$. The critical difference is that \ref{eq:sf-sps-safe} requires only a lower bound $\ell_{\bm{\zeta}_t}^*$ rather than the oracle value $f_{\bm{\zeta}_t}(\bm{x}_\star)$, making the method fully adaptive. There is also a difference in assumptions: \Cref{thm:oracle} requires $G$-Lipschitz continuity only locally on the ball $B$, while \Cref{thm:safe} requires it globally. This is because the safeguarded step size does not guarantee Fej\'er monotonicity of the $\bm{z}$ iterates, so the iterates cannot be confined to $B$ a priori.

Furthermore, the left-hand side of \eqref{eq:safe-rate} contains, in addition to the suboptimality gap, a non-negative Bregman divergence term weighted by $\beta/(1-\beta)$. Since the bound holds for the sum of two non-negative quantities, it implies in particular that $\E{f(\bm{x}_t)-f(\bm{x}_\star)}$ is bounded by the right-hand side. The Bregman term measures how far the evaluation sequence $\bm{x}_k$ departs from the gradient-location sequence $\bm{y}_k$, and its presence on the left-hand side suggests that these sequences stay close throughout training. When $\beta=0$, the coefficient $\beta/(1-\beta)$ vanishes, the Bregman term drops out, and \Cref{thm:safe} reduces to the \ref{eq:sps-safe} guarantee of \citet{oikonomou2025safeguarded}.

\section{Practical Variants}
\label{sec:practical}

The step size rules introduced in \Cref{sec:polyak-sf} and analyzed in \Cref{sec:convergence} are derived for the SGD-based Schedule-Free update. To make the method more broadly applicable in practice, we now discuss two extensions used in our experiments: a preconditioned Adam variant and an adaptive safeguard.

\subsection{Preconditioned Extension: Adam}
\label{sec:adam-polyak}

Adaptive gradient methods such as Adam \citep{Kingma2014AdamAM} and its decoupled weight-decay variant AdamW \citep{loshchilovdecoupled} have become the dominant optimizers for training deep networks. This line of adaptive preconditioning, building on AdaGrad \citep{duchi2011adaptive} and RMSProp \citep{tieleman2012lecture}, is especially critical for problems with heterogeneous curvature such as transformer training. The Schedule-Free framework has already been shown to combine effectively with this preconditioner, achieving state-of-the-art results across a wide range of tasks \citep{DYK+24}, which motivates extending our step size rules to this setting.

Let $\mD_t \succ 0$ be a symmetric positive definite preconditioner and let $\norm{\bm{v}}_{\mD_t^{-1}}^2 := \bm{v}^\top \mD_t^{-1} \bm{v}$. Replacing the gradient step with $\bm{z}_t = \bm{z}_{t-1} - \gamma_t \mD_t^{-1} \nabla f_{\bm{\zeta}_t}(\bm{y}_t)$ and expanding $\norm{\bm{z}_t - \bm{x}_\star}_{\mD_t}^2$ via the same argument as in \Cref{sec:polyak-sf-oracle} yields an upper bound identical to \eqref{eq:upper-bound} with $\norm{\nabla f_{\bm{\zeta}_t}(\bm{y}_t)}^2$ replaced by $\norm{\nabla f_{\bm{\zeta}_t}(\bm{y}_t)}_{\mD_t^{-1}}^2$. Minimizing over $\gamma_t \geq 0$, and applying the oracle-to-safeguarded modifications of \Cref{sec:polyak-sf-safe}, gives
\begin{align}
    \label{eq:sf-adam-sps-plus}
    \gamma_t &= \frac{\left[f_{\bm{\zeta}_t}(\bm{y}_t) - f_{\bm{\zeta}_t}(\bm{x}_\star) + \beta\dotprod{\nabla f_{\bm{\zeta}_t}(\bm{y}_t), \bm{z}_{t-1} - \bm{x}_t}\right]_+}{\norm{\nabla f_{\bm{\zeta}_t}(\bm{y}_t)}_{\mD_t^{-1}}^2},
    \tag{SF-Adam-SPS$_+$} \\[6pt]
    \label{eq:sf-adam-sps-safe}
    \gamma_t &= \frac{\left[f_{\bm{\zeta}_t}(\bm{y}_t) - \ell_{\bm{\zeta}_t}^* + \beta\dotprod{\nabla f_{\bm{\zeta}_t}(\bm{y}_t), \bm{z}_{t-1} - \bm{x}_t}\right]_+}{\max\{\norm{\nabla f_{\bm{\zeta}_t}(\bm{y}_t)}_{\mD_t^{-1}}^2, M\}}.
    \tag{SF-Adam-SPS$_{\text{safe}}$}
\end{align}
Setting $\mD_t = \mI$ recovers \ref{eq:sf-ps} and \ref{eq:sf-sps-safe}, respectively. The complete procedure, instantiated with the standard Adam diagonal preconditioner, is given in \Cref{alg:adam-sf-sps}.

\begin{remark}[Practical details]
    \label{rem:practical}
    Since $\bm{y}_t = (1-\beta)\bm{z}_{t-1} + \beta\bm{x}_t$ implies $\beta(\bm{z}_{t-1} - \bm{x}_t) = \bm{z}_{t-1} - \bm{y}_t$, the momentum correction appearing in all four step sizes (\ref{eq:sf-ps}, \ref{eq:sf-sps-safe}, \ref{eq:sf-adam-sps-plus}, \ref{eq:sf-adam-sps-safe}) satisfies
    \begin{equation*}
        \beta\dotprod{\nabla f_{\bm{\zeta}_t}(\bm{y}_t), \bm{z}_{t-1} - \bm{x}_t} = \dotprod{\nabla f_{\bm{\zeta}_t}(\bm{y}_t), \bm{z}_{t-1} - \bm{y}_t},
    \end{equation*}
    so the numerator can be computed directly from $\bm{z}_{t-1}$ and $\bm{y}_t$, avoiding the need to store $\bm{x}_t$.
\end{remark}

\subsection{Adaptive Safeguard via Exponential Moving Average}
\label{sec:ema-safeguard}

The safeguarded step sizes \ref{eq:sf-sps-safe} and \ref{eq:sf-adam-sps-safe} require choosing the constant $M>0$. While any positive value guarantees bounded step sizes, the choice affects practical performance: too large and the safeguard dominates; too small and it fails to prevent occasional large steps. Prior work on \ref{eq:sps-safe} for standard SGD observed similar sensitivity and proposed replacing the fixed $M$ with an exponential moving average of past squared gradient norms \citep{oikonomou2025safeguarded}. We adopt the same strategy for the Schedule-Free setting. Concretely, the safeguard is updated as
\begin{align}
    \label{eq:ema-safeguard}
    M_t = \beta_M \, M_{t-1} + (1-\beta_M)\,\norm{\nabla f_{\bm{\zeta}_t}(\bm{y}_t)}^2,
\end{align}
with $M_0 = \norm{\nabla f_{\bm{\zeta}_0}(\bm{y}_0)}^2$, and the fixed $M$ in \ref{eq:sf-sps-safe} or \ref{eq:sf-adam-sps-safe} is replaced by $M_t$. The smoothing parameter $\beta_M \in [0,1)$ controls the memory of the estimate; we recommend $\beta_M = 0.99$ as a robust default. This adaptive safeguard tracks the scale of the gradients throughout training without introducing additional hyperparameters that require careful tuning.

\section{Experiments}
\label{sec:experiments_main}

Our experiments evaluate the proposed Polyak step sizes across two settings, each targeting a different aspect of the method: black-box model distillation (\Cref{sec:modeldist}), where the teacher's loss approximates the oracle value and allows us to test \ref{eq:sf-ps}; and language model pretraining (\Cref{sec:langmod}), where we evaluate \ref{eq:sf-adam-sps-safe} at scale. Our main finding is that the proposed step sizes match or exceed tuned baselines in validation loss while exhibiting substantially greater robustness to hyperparameter choices.

\subsection{Model Distillation}
\label{sec:modeldist}

\begin{figure}
    \centering
    \begin{subfigure}[b]{0.49\linewidth}
        \centering
        \includegraphics[width=0.96\linewidth]{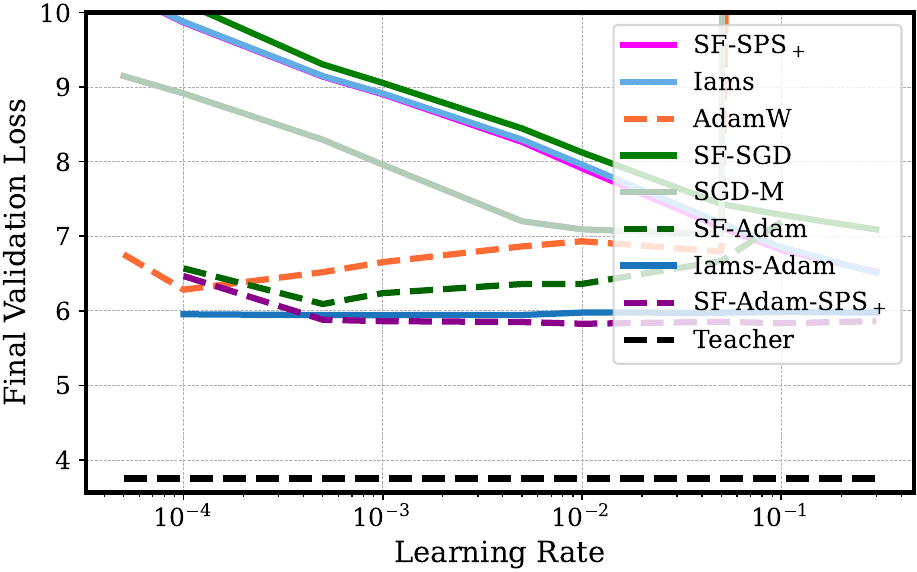}
        \caption{\texttt{tiny\_shakespeare}: LR sensitivity}
        \label{fig:shakespeare-lr}
    \end{subfigure}
    \begin{subfigure}[b]{0.49\linewidth}
        \centering
        \includegraphics[width=\linewidth]{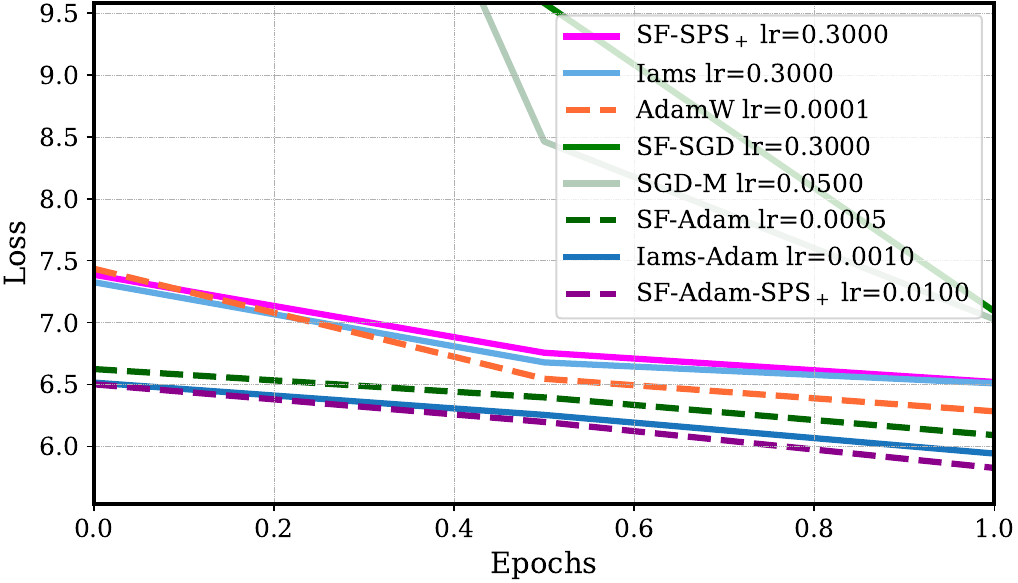}
        \caption{\texttt{tiny\_shakespeare}: validation loss}
        \label{fig:shakespeare}
    \end{subfigure}

    \caption{Black-box model distillation results. Distilling \texttt{gpt2-medium} into a 67.7M-parameter student on \texttt{tiny\_shakespeare}. Left: final validation loss as a function of the base learning rate. Right: validation loss curves for the best learning rate of each method.}
    \label{fig:distillation}
\end{figure}

The oracle step size \ref{eq:sf-ps} requires access to the per-sample optimal value $f_{\bm{\zeta}_t}(\bm{x}_\star)$, which is unavailable in general. Black-box model distillation offers a natural setting where this quantity can be approximated. Let $f^t_{\bm{\zeta}}$ and $f^s_{\bm{\zeta}}(\bm{x})$ denote the teacher's and student's loss on batch $\bm{\zeta}$, respectively. When the teacher has been trained to near-optimality on a large corpus and the student has sufficient capacity to match it, the teacher loss satisfies $f^t_{\bm{\zeta}} \approx f^s_{\bm{\zeta}}(\bm{x}_\star)$, where $\bm{x}_\star$ denotes the optimal student parameters. 

We test the Adam-based oracle variant \ref{eq:sf-adam-sps-plus} (see \Cref{alg:adam-sf-sps} for pseudocode). Our setup follows \citet{gower2025analysisidealizedstochasticpolyak}: we compare against SGD with momentum, Adam~\citep{Kingma2014AdamAM}, (Adam-)ScheduleFree~\citep{DYK+24}, and Iams(-Adam)~\citep{gower2025analysisidealizedstochasticpolyak}. For every baseline we sweep the base learning rate. For \ref{eq:sf-adam-sps-plus}, the step size is computed adaptively via $\gamma_t = \min\{\gamma_{\max},\, \tau_t\}$, where $\tau_t$ is the oracle Polyak step and $\gamma_{\max} > 0$ is a maximum step size that prevents excessively large updates when the teacher-loss approximation is imprecise. This cap is a standard safeguard in stochastic Polyak methods~\citep{LVLL21}; we sweep $\gamma_{\max}$ in the same manner as the base learning rate for the baselines. 

\paragraph{Distilling \texttt{tiny\_shakespeare}.} 
The teacher model employed was \texttt{gpt2-medium} (345 million parameters), a pre-trained transformer model from the Hugging Face library \citep{radford2019language}. We used a student model with 67.7 million parameters, see \Cref{tab:distillation_training_config} in \Cref{subsec:black-box-setting} for details. The results in \Cref{fig:shakespeare} show that our \ref{eq:sf-adam-sps-plus} achieves the best loss for a tuned learning rate $\gamma_{\max},$ and is  as robust as the Iams-Adam method  to the choice of learning rate.

\subsection{Language Modeling}
\label{sec:langmod}

We test \ref{eq:sf-adam-sps-safe} on pretraining GPT-2~\citep{radford2019language} style decoders on the FineWeb 10B dataset~\citep{penedo2024fineweb}, matching the architecture, batch size, number of training iterations, and schedules of~\citet{DYK+24}. Following \citet{DYK+24}, we replace the theoretical averaging weights $c_{t+1} = 1/(t+1)$ with the practical heuristic $c_{t+1} = \gamma_t^2 / \sum_{i=1}^t \gamma_i^2$ and use a warmup-stable schedule. We compare against AdamW and Schedule-Free AdamW, sweeping the base learning rate for both baselines over six values (see \Cref{tab:arch,tab:training,tab:optim} in \Cref{app:experiments} for full details). For \ref{eq:sf-adam-sps-safe}, the step size is computed adaptively and requires no learning rate sweep and introduce no additional per-step overhead, since the loss is already computed during the forward pass. We evaluate two safeguard strategies: a fixed constant $M=10$ and the EMA-based adaptive safeguard from \Cref{sec:ema-safeguard} with $\beta_M = 0.99$.

\Cref{fig:lr-stability-gptsmall} shows results for GPT-2 (small). In terms of training loss, \ref{eq:sf-adam-sps-safe} matches Schedule-Free AdamW across both $M$ settings, with both outperforming AdamW. In terms of validation loss, \ref{eq:sf-adam-sps-safe} achieves the best performance of all methods regardless of $M$ setting, suggesting better generalization. The LR sensitivity plot confirms that \ref{eq:sf-adam-sps-safe} requires no learning rate tuning, remaining flat across the full sweep while AdamW and Schedule-Free AdamW degrade sharply outside a narrow optimal range. We plot the adaptive Polyak step-sizes in \Cref{fig:polyak-steps-gptsmall} and plot the safeguard $M$ against preconditioned gradient norms in \Cref{fig:polyak-M-gptsmall}.

\begin{figure}
    \centering
    \begin{subfigure}[b]{0.45\linewidth}
        \includegraphics[width=\linewidth]{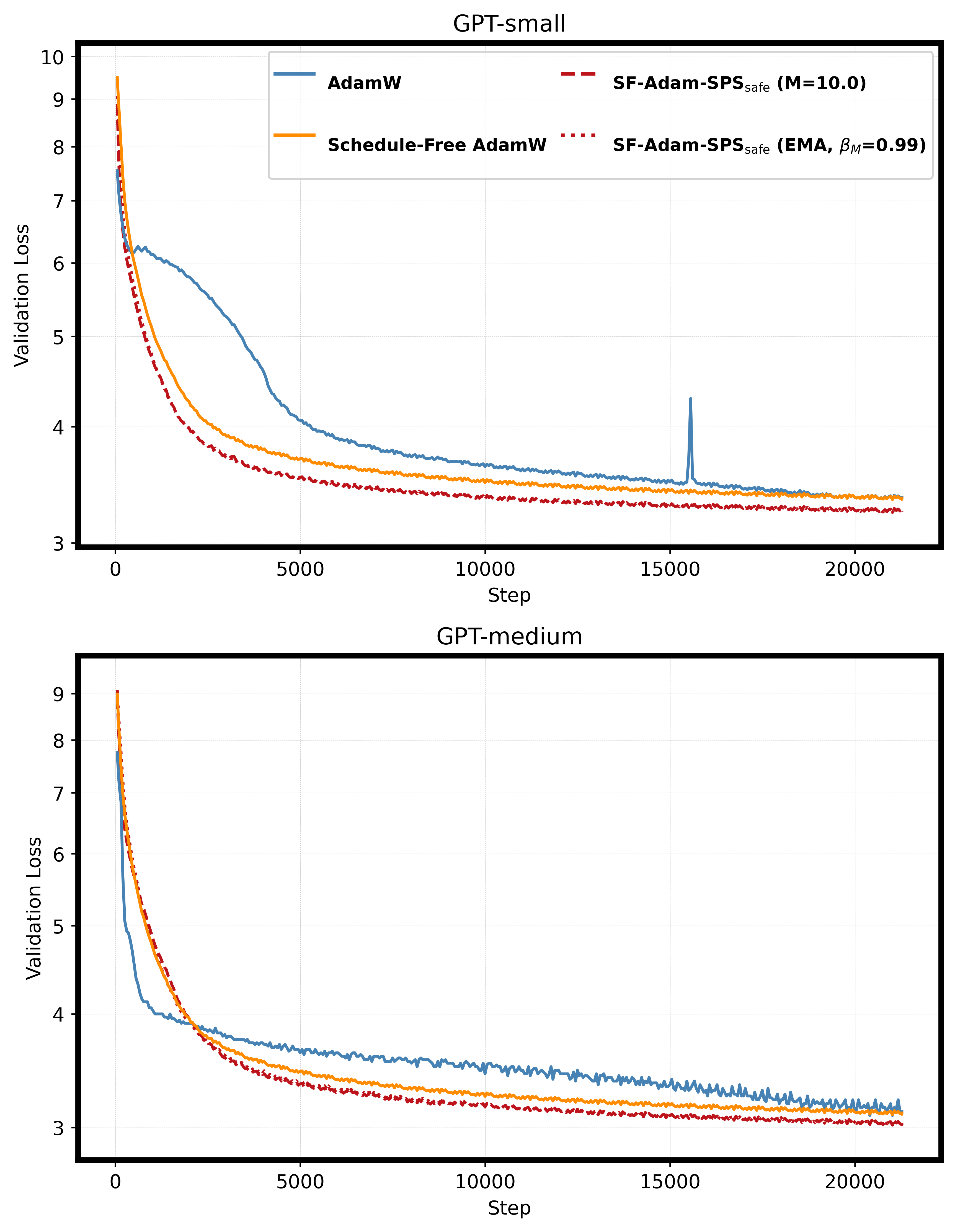}
    \end{subfigure}
    ~
    \begin{subfigure}[b]{0.45\linewidth}
        \includegraphics[width=\linewidth]{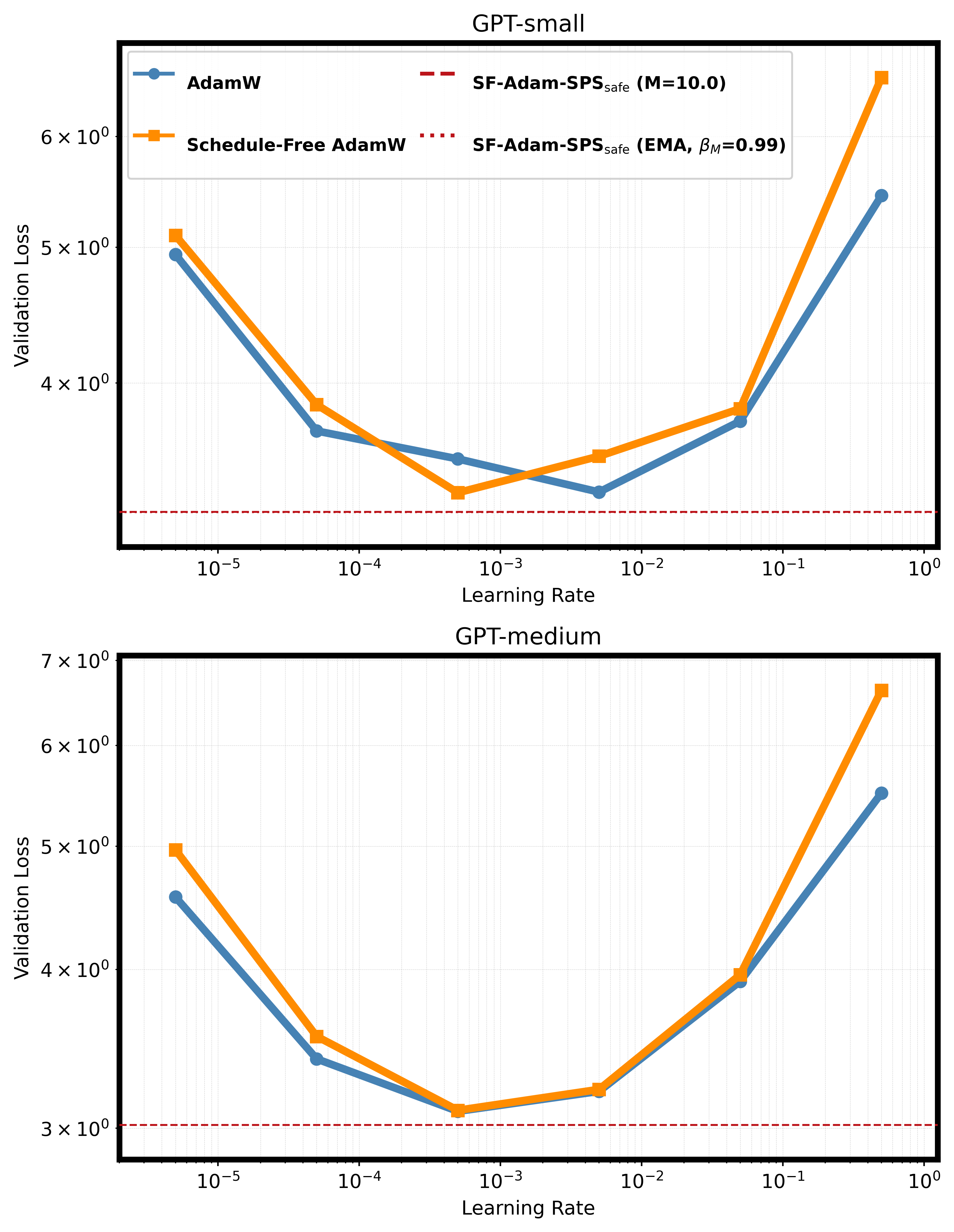}
    \end{subfigure}
    \caption{Validation loss (\textit{left}) and LR sensitivity (\textit{right}) for pretraining GPT2-small (124M parameters, \textit{top}) and GPT2-medium (335M parameters, \textit{bottom}) on the \texttt{fineweb10B} data set}
    \label{fig:lr-stability-gptsmall}
\end{figure}

\paragraph{Computational cost.} 
Beyond matching or exceeding tuned baselines in validation loss, the adaptive step size yields a substantial reduction in total compute. Finding the best learning rate for AdamW and Schedule-Free AdamW required sweeping over six values, consuming approximately 220 and 195 GPU-hours respectively on 4×H100s. By contrast, \ref{eq:sf-adam-sps-safe} computes its step size adaptively and requires no sweep: a single run completes in approximately 21 GPU-hours on 4×A100s: this would be faster on equivalent H100 hardware, since the H100 offers roughly 2–3× higher throughput for mixed-precision training. This makes the method particularly attractive in large-scale settings where learning rate tuning constitutes a significant fraction of the total training budget.

\section{Conclusion}
\label{sec:conc}

In this work, we derive Polyak-type step sizes for Schedule-Free SGD, yielding adaptive learning rates that require no knowledge of the Lipschitz constant, the initial distance to the optimum, or the stopping time. We provide last-iterate convergence guarantees for both an oracle and a safeguarded variant, achieving an $\mathcal{O}(1/\sqrt{t})$ rate for the oracle variant and the same rate up to a neighborhood for the safeguarded variant. These results offer a principled route to fully adaptive schedule-free optimization, reducing the tuning burden to a single safeguard constant. Future directions include extending the convergence analysis to smooth or strongly convex objectives to obtain faster rates, establishing guarantees for the Adam-based extension, and investigating tighter variance measures that could shrink or eliminate the convergence neighborhood beyond the interpolation regime.

\bibliographystyle{plainnat}
\bibliography{reference}

\newpage
\appendix

\newpage
\part*{Supplementary Material}

The Supplementary Material begins with a broader discussion of related work on Polyak-type step sizes (\Cref{app:related-polyak}), followed by the auxiliary lemmas and complete proofs of our convergence guarantees (\Cref{app:proofs}), and concludes with additional experimental results (\Cref{app:experiments}).

\tableofcontents

\newpage
\section{Related Work on Polyak Step Sizes}
\label{app:related-polyak}

The Polyak step size \citep{polyak1969minimization} computes the learning rate at each iteration from the current function value, the optimal value $f^\star$, and the gradient norm, removing the need to specify a base learning rate. While originally analyzed for convex Lipschitz objectives, \citet{HK19} broadened its theoretical foundations to cover smooth and strongly convex settings as well, and proposed a variant that estimates $f^\star$ online rather than requiring it as input. Extending these ideas to stochastic optimization, \citet{LVLL21} proposed SPS$_{\max}$ and established convergence in two settings: convex smooth objectives and, separately, convex Lipschitz objectives, with the latter requiring the interpolation condition. \citet{garrigos2023function} relaxed this by introducing the SPS$_+$ rule, which achieves the optimal rate in the Lipschitz setting without interpolation, at the cost of requiring per-sample optimal function values; they also proposed Function Value Learning to estimate these quantities in practice. To remove this dependence as well, \citet{oikonomou2025safeguarded} introduced a denominator safeguard that replaces the per-sample optimal values with any available lower bound, yielding convergence guarantees for non-smooth objectives under minimal assumptions.

Several works have refined and broadened SPS-type methods. \citet{orvieto2022dynamics} identified biases introduced by stochastic Polyak steps outside interpolation and proposed decreasing-step variants that converge exactly. \citet{orabona2025new} gave a complementary surrogate-loss perspective, interpreting the Polyak step size and several stochastic variants as gradient descent on non-negative surrogate functions, and also established negative results showing that some Polyak variants may converge only to a neighborhood outside favorable regimes such as interpolation. \citet{gower2022cutting} unified SPS-type rules through a cutting-plane perspective, introducing slack variants for non-interpolated settings. For composite objectives, \citet{schaipp2023stochastic} developed a proximal Polyak step size that handles non-smooth regularizers. \citet{d2023stochastic} generalized the framework to mirror descent in non-Euclidean geometries and \citet{SGDstruct} established Polyak step size guarantees for the broader class of quasar-convex functions. In the distributed setting, \citet{mukherjee2024locally} employed locally adaptive Polyak-type step sizes within a federated learning framework. Another line of work removes the need to know the optimal function value by estimating it online; for example, \citet{abdukhakimov2025polyak} proposed a parameter-free Polyak step size for deterministic gradient descent that maintains auxiliary iterates to estimate $f^\star$ during the run.

Closer to our setting are Polyak-type rules for momentum methods. \citet{schaipp2024momo} proposed MoMo, which builds momentum-based models of sampled losses to produce adaptive step sizes compatible with heavy-ball updates. \citet{oikonomou2024stochastic} designed Polyak step sizes for stochastic heavy-ball with guarantees both to neighborhoods and to exact minimizers via decreasing variants. \citet{pmlr-v202-wang23l} explored adaptive Polyak-type rules in momentum settings. \citet{orvieto2024NGN} connected Polyak-like adaptivity to second-order modeling by exploiting non-negative losses, writing them as squares, and applying stochastic Gauss-Newton or Levenberg-Marquardt steps that automatically warm up and decay with the loss landscape. \citet{gower2025analysisidealizedstochasticpolyak} analyzed an idealized stochastic Polyak method for momentum and applied it to black-box model distillation, which directly motivates our experimental setup.

\newpage
\section{Proofs of Convergence Guarantees}
\label{app:proofs}
 
This section contains the proofs of \Cref{thm:oracle,thm:safe}. We first collect the auxiliary lemmas used in both proofs, then present each theorem proof in turn.
 
\subsection{Auxiliary Lemmas}
\begin{lemma}[Extended Titu's Lemma] \label{lem:titu}
	For any random variable $X$ and positive-valued random variable $Y$, it holds
	\begin{equation}
		\E{\frac{(X)_+^2}{Y}} \ge \frac{\left(\E{X}\right)_+^2}{\E{Y}}. \label{eq:titu_expectation}
	\end{equation}
	In addition, for any numbers $a_0,\dotsc, a_k$ and positive numbers $b_0,\dotsc, b_k$, we have
	\begin{equation}
		\sum_{t=0}^k \frac{(a_t)_+^2}{b_t} \ge \frac{\left(\sum_{t=0}^k a_t\right)_+^2}{\sum_{t=0}^k b_t}. \label{eq:titu_numbers}
	\end{equation}
\end{lemma}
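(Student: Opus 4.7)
The plan is to prove the expectation form \eqref{eq:titu_expectation} first by a direct application of Cauchy--Schwarz, and then to deduce the numerical form \eqref{eq:titu_numbers} as a consequence (or by repeating the same argument on counting measure).

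First, I would split into two cases based on the sign of $\mathbb{E}[X]$. If $\mathbb{E}[X]\le 0$, then $(\mathbb{E}[X])_+^2 = 0$, the right-hand side of \eqref{eq:titu_expectation} vanishes, and the left-hand side is non-negative because $Y>0$ almost surely and $(X)_+^2 \ge 0$. So the inequality is trivial in this case.

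In the non-trivial case $\mathbb{E}[X]>0$, the key step is the Cauchy--Schwarz inequality applied to the pair $U = (X)_+/\sqrt{Y}$ and $V = \sqrt{Y}$, giving
\begin{equation*}
\mathbb{E}\!\left[\frac{(X)_+^2}{Y}\right]\cdot \mathbb{E}[Y] \;\ge\; \bigl(\mathbb{E}[(X)_+]\bigr)^2.
\end{equation*}
Since $(X)_+ \ge X$ pointwise, we have $\mathbb{E}[(X)_+] \ge \mathbb{E}[X] > 0$, and hence $(\mathbb{E}[(X)_+])^2 \ge (\mathbb{E}[X])^2 = (\mathbb{E}[X])_+^2$. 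Dividing by $\mathbb{E}[Y] > 0$ yields \eqref{eq:titu_expectation}.

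For the numerical form \eqref{eq:titu_numbers}, the cleanest route is to view $(a_t, b_t)$ as the joint values of $(X,Y)$ on the discrete probability space $\{0,\ldots,k\}$ equipped with the uniform measure, so that the expectation version immediately translates into a statement about sums (the normalization factors cancel on both sides). Alternatively, one can repeat the Cauchy--Schwarz argument directly using the finite-dimensional Cauchy--Schwarz $\bigl(\sum_t u_t v_t\bigr)^2 \le \bigl(\sum_t u_t^2\bigr)\bigl(\sum_t v_t^2\bigr)$ with $u_t = (a_t)_+/\sqrt{b_t}$ and $v_t = \sqrt{b_t}$, and conclude with the same sign-splitting argument using $\sum_t (a_t)_+ \ge \sum_t a_t$.

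There is no substantive obstacle here: the only subtlety is making sure to use $(X)_+$ (rather than $X$) inside the Cauchy--Schwarz step, because the pointwise bound $(X)_+ \ge X$ is what lets us relax from $\mathbb{E}[(X)_+]$ to $(\mathbb{E}[X])_+$ at the end, which is necessary since in general $\mathbb{E}[(X)_+] \neq (\mathbb{E}[X])_+$.
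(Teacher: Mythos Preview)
The paper states this lemma without proof, so there is no reference argument to compare against. Your proof is correct and standard: the Cauchy--Schwarz step with $U=(X)_+/\sqrt{Y}$, $V=\sqrt{Y}$ followed by the pointwise bound $(X)_+\ge X$ is exactly the right mechanism, and the sign-splitting handles the positive part cleanly. The reduction of the discrete form to the expectation form via counting measure (or the parallel finite Cauchy--Schwarz argument) is also fine.
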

 
\begin{lemma}\label{lem:boundediterates}
If $f_{\bm{\zeta}}$ is convex for every $\bm{\zeta},$ and  we use the learning rate~\ref{eq:sf-ps} we have that   
\begin{align}
   \norm{\bm{z}_{t} - \bm{x}_{\star}}^2\
    &\le 
    \norm{\bm{z}_{t-1} - \bm{x}_{\star}}^2  -\frac{\big(f_{\bm{\zeta}_t}(\bm{y}_t) -f_{\bm{\zeta}_t}(\bm{x}_{\star}) +\beta \dotprod{\nabla f(\bm{y}_t,\bm{\zeta}_t),\bm{z}_{t-1} -\bm{x}_t } \big)_+^2}{\norm{\nabla f(\bm{y}_t,\bm{\zeta}_t)}^2 }.\label{eq:bounded-iterates}
\end{align}
As a consequence we also have that
$\|\bm{z}_t-\bm{x}_{\star}\|, \|\bm{x}_t-\bm{x}_{\star}\|$ and $\|\bm{y}_t-\bm{x}_{\star}\|$  are less than $\|\bm{x}_0 -\bm{x}_{\star}\|$.
Furthermore, taking expectation we have that
\begin{align}
   \EE{}{\norm{\bm{z}_{t} - \bm{x}_{\star}}^2 }
    & \le 
 \E{ \norm{\bm{z}_{t-1} - \bm{x}_{\star}}^2}  -\frac{\big(\E{f(\bm{y}_t) -f(\bm{x}_{\star}) +\beta \dotprod{\nabla f(\bm{y}_t),\bm{z}_{t-1} -\bm{x}_t } } \big)_+^2}{\E{\norm{\nabla f(\bm{y}_t,\bm{\zeta}_t)}^2} }.\label{eq:tesntoeisnt2E}
\end{align}
\end{lemma}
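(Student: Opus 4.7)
The plan is to prove the three claims in sequence, leveraging the descent inequality derived in Section~\ref{sec:polyak-der} together with the definition of the Polyak-style step~\eqref{eq:polyakstep}, and then invoking the extended Titu lemma~\eqref{eq:titu_expectation}.

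For~\eqref{eq:bounded-iterates}, let $A_t := f_{\bm{\zeta}_t}(\bm{y}_t) - f_{\bm{\zeta}_t}(\bm{x}_{\star}) + \beta \dotprod{\nabla f(\bm{y}_t,\bm{\zeta}_t), \bm{z}_{t-1} - \bm{x}_t}$ and $N_t := \norm{\nabla f(\bm{y}_t,\bm{\zeta}_t)}^2$, so that the key inequality in Section~\ref{sec:polyak-der} becomes $\norm{\bm{z}_t - \bm{x}_{\star}}^2 \leq \norm{\bm{z}_{t-1} - \bm{x}_{\star}}^2 + \gamma_t^2 N_t - 2 \gamma_t A_t$. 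Plugging in $\gamma_t = (A_t)_+/N_t$ from~\eqref{eq:polyakstep} and splitting into the cases $A_t > 0$ (where $\gamma_t^2 N_t - 2\gamma_t A_t = -A_t^2/N_t = -(A_t)_+^2/N_t$) and $A_t \leq 0$ (where $\gamma_t = 0$ and $(A_t)_+^2/N_t = 0$) yields~\eqref{eq:bounded-iterates} directly.

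Boundedness of the iterates then follows by induction. The right side of~\eqref{eq:bounded-iterates} is at most $\norm{\bm{z}_{t-1} - \bm{x}_{\star}}^2$, so $\norm{\bm{z}_t - \bm{x}_{\star}}$ is non-increasing in $t$; combined with the initialization $\bm{z}_{-1} = \bm{x}_0$, this gives $\norm{\bm{z}_t - \bm{x}_{\star}} \leq \norm{\bm{x}_0 - \bm{x}_{\star}}$ for every $t \geq -1$. Since~\eqref{eq:xupa} and~\eqref{eq:yupa} express $\bm{x}_{t+1}$ and $\bm{y}_t$ as convex combinations with weights $c_{t+1} \in [0,1]$ and $\beta \in [0,1)$, the triangle inequality gives $\norm{\bm{x}_{t+1} - \bm{x}_{\star}} \leq \max\{\norm{\bm{x}_t - \bm{x}_{\star}}, \norm{\bm{z}_t - \bm{x}_{\star}}\}$ and similarly for $\bm{y}_t$, so an easy induction bounds all three quantities by $\norm{\bm{x}_0 - \bm{x}_{\star}}$.

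Finally, for~\eqref{eq:tesntoeisnt2E} I would take the total expectation of~\eqref{eq:bounded-iterates} and apply the extended Titu lemma~\eqref{eq:titu_expectation} with $X = A_t$ and $Y = N_t$, giving $\E{(A_t)_+^2/N_t} \geq (\E{A_t})_+^2 / \E{N_t}$. The tower property then simplifies $\E{A_t}$: conditionally on $\bm{\zeta}_0,\ldots,\bm{\zeta}_{t-1}$, the vectors $\bm{y}_t, \bm{x}_t, \bm{z}_{t-1}$ are fixed, so $\EE{\bm{\zeta}_t}{f_{\bm{\zeta}_t}(\bm{y}_t) - f_{\bm{\zeta}_t}(\bm{x}_{\star})} = f(\bm{y}_t) - f(\bm{x}_{\star})$ and $\EE{\bm{\zeta}_t}{\nabla f(\bm{y}_t,\bm{\zeta}_t)} = \nabla f(\bm{y}_t)$, and taking the outer expectation reproduces the numerator in~\eqref{eq:tesntoeisnt2E}. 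The only genuine subtlety in the whole argument is that the Polyak step entangles $\bm{\zeta}_t$ in both the numerator and denominator, so a naive conditional-expectation argument cannot decouple them into a ratio of expectations; this is exactly what the extended Titu lemma is designed to handle, which is why the proof is so short once that lemma is available.
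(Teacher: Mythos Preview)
Your proposal is correct and follows essentially the same approach as the paper: plug the Polyak step~\eqref{eq:polyakstep} into the descent inequality~\eqref{eq:tesntoeisnt}, use convex-combination bounds plus induction for boundedness, and invoke the extended Titu lemma for the expectation bound. The only cosmetic difference is that the paper applies Titu's lemma twice (first under conditional expectation to obtain an intermediate inequality, then again after taking total expectation), whereas you apply it once to the full expectation and then use the tower property to simplify $\E{A_t}$; your route is slightly more economical but equivalent.
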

\begin{proof}
 Inserting~\ref{eq:sf-ps} into ~\eqref{eq:upper-bound}  gives the first result, which also shows that  $\norm{\bm{z}_{t} - \bm{x}_{\star}} \leq \norm{\bm{z}_{0} - \bm{x}_{\star}} = \norm{\bm{x}_{0} - \bm{x}_{\star}}$. Since $\bm{x}_{t+1}$ is a convex combination of $\bm{x}_t$ and $\bm{z}_t,$ we have that
 \[\|\bm{x}_{t+1} -\bm{x}_{\star}\| \leq (1-c_{t+1})\|\bm{x}_{t} -\bm{x}_{\star}\|  +c_{t+1}\|\bm{z}_{t} -\bm{x}_{\star}\| \]
 from which we can use induction to show $\norm{\bm{x}_{t} - \bm{x}_{\star}} \leq \norm{\bm{x}_{0} - \bm{x}_{\star}}.$ Furthermore, since $\bm{y}_t$ is a convex combination of $\bm{z}_{t-1}$ and $\bm{x}_t,$ it also follows by induction that $\| \bm{y}_t -\bm{x}_{\star}\| \leq \| \bm{x}_0 -\bm{x}_{\star}\|.$

    Taking conditional expectation over~\eqref{eq:bounded-iterates} given $\bm{x}_t$ and $\bm{z}_{t-1}$ and using Lemma~\ref{lem:titu} gives
    \begin{align}
        \mathbb{E}_t[\| \bm{z}_{t} - \bm{x}_{\star}\|^2] \le \| \bm{z}_{t-1} - \bm{x}_{\star} \|^2 - \frac{\big(f(\bm{y}_t) - f(\bm{x}_{\star}) + \beta \langle \nabla f(\bm{y}_t), \bm{z}_{t-1} - \bm{x}_t \rangle \big)_+^2}{\mathbb{E}_t[\|\nabla f(\bm{y}_t,\bm{\zeta}_t) \|^2]}. \label{eq:cond-expected-iterates}
    \end{align}
    Finally, taking total expectation over~\eqref{eq:cond-expected-iterates}, and using the law of total expectation and Lemma~\ref{lem:titu} again,  yields~\eqref{eq:tesntoeisnt2E}.
\end{proof}
 
Next we develop the Bregman viewpoint of this method. 
\begin{lemma}
Let $\lambda =\frac{\beta}{1-\beta} $. It follows that
\begin{align}
    f(\bm{y}_t) -f(\bm{x}_{\star}) +\beta\dotprod{\nabla f(\bm{y}_t),\bm{z}_{t-1} -\bm{x}_t } &= (1+\lambda)(f(\bm{y}_t) -f(\bm{x}_{\star}))  
  \nonumber\\
 & \quad -\lambda (f(\bm{x}_t) -f(\bm{x}_{\star})) \nonumber \\
    & \quad +\lambda  B_{f} (\bm{x}_t,\bm{y}_t),\label{eq:bregmanview}
\end{align}    
where $B_{f} (\bm{x}_t,\bm{y}_t)$ is the Bregman divergence of $f$ that is
\[ B_{f} (\bm{x},\bm{y}) := f (\bm{x}) - f(\bm{y}) -\dotprod{\nabla f(\bm{y}),\bm{x} -\bm{y} }. \]
\end{lemma}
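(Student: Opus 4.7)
The plan is a direct algebraic identity, so no inequalities or assumptions beyond the existence of $\nabla f(\bm{y}_t)$ and the definition of $\lambda$ should be needed. First I would use equation~\eqref{eq:zt-combo-xt-yt}, which was already derived from the definition of $\bm{y}_t$, to rewrite the displacement $\bm{z}_{t-1} - \bm{x}_t$ purely in terms of $\bm{y}_t - \bm{x}_t$. A quick computation gives
\[
\bm{z}_{t-1} - \bm{x}_t \;=\; \bm{y}_t - \tfrac{\beta}{1-\beta}(\bm{x}_t - \bm{y}_t) - \bm{x}_t \;=\; \tfrac{1}{1-\beta}(\bm{y}_t - \bm{x}_t),
\]
so that $\beta\dotprod{\nabla f(\bm{y}_t),\bm{z}_{t-1} - \bm{x}_t} = \lambda \dotprod{\nabla f(\bm{y}_t),\bm{y}_t - \bm{x}_t}$.

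Next I would apply the definition of the Bregman divergence $B_f(\bm{x}_t,\bm{y}_t) = f(\bm{x}_t) - f(\bm{y}_t) - \dotprod{\nabla f(\bm{y}_t), \bm{x}_t - \bm{y}_t}$ and solve for the inner product, yielding
\[
\dotprod{\nabla f(\bm{y}_t),\bm{y}_t - \bm{x}_t} \;=\; f(\bm{y}_t) - f(\bm{x}_t) + B_f(\bm{x}_t,\bm{y}_t).
\]
Multiplying by $\lambda$ and combining with the leading $f(\bm{y}_t) - f(\bm{x}_{\star})$ term on the left-hand side of~\eqref{eq:bregmanview} produces
\[
f(\bm{y}_t) - f(\bm{x}_{\star}) + \beta \dotprod{\nabla f(\bm{y}_t),\bm{z}_{t-1} - \bm{x}_t} \;=\; (1+\lambda)f(\bm{y}_t) - f(\bm{x}_{\star}) - \lambda f(\bm{x}_t) + \lambda B_f(\bm{x}_t,\bm{y}_t).
\]

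Finally I would verify that the right-hand side of~\eqref{eq:bregmanview}, namely $(1+\lambda)(f(\bm{y}_t)-f(\bm{x}_{\star})) - \lambda(f(\bm{x}_t)-f(\bm{x}_{\star})) + \lambda B_f(\bm{x}_t,\bm{y}_t)$, expands to the same expression: the $f(\bm{x}_{\star})$ coefficients collapse as $-(1+\lambda)+\lambda = -1$, matching the single $-f(\bm{x}_{\star})$ term above. Since this is an exact identity, there is no real obstacle; the only mild care is tracking the sign and the $\tfrac{1}{1-\beta}$ factor when converting from $\bm{z}_{t-1} - \bm{x}_t$ to $\bm{y}_t - \bm{x}_t$, which cancels precisely against $\beta$ to produce $\lambda$.
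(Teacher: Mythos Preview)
Your proof is correct and follows essentially the same approach as the paper: both use the relation $\bm{z}_{t-1}-\bm{x}_t=\tfrac{1}{1-\beta}(\bm{y}_t-\bm{x}_t)$ from~\eqref{eq:y-sfsgd} (equivalently~\eqref{eq:zt-combo-xt-yt}) to convert $\beta\dotprod{\nabla f(\bm{y}_t),\bm{z}_{t-1}-\bm{x}_t}$ into $\lambda\dotprod{\nabla f(\bm{y}_t),\bm{y}_t-\bm{x}_t}$, and then add and subtract $\lambda f(\bm{x}_t)$ and $\lambda f(\bm{y}_t)$ to expose the Bregman divergence. The only cosmetic difference is that you first solve the Bregman definition for the inner product and substitute, whereas the paper writes the add-and-subtract step directly.
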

\begin{proof}
Using  $ \bm{z}_{t-1}-\bm{x}_t = \frac{1}{1-\beta}(\bm{y}_t-\bm{x}_t) $ which follows from the $\bm{y}_t$ update in \ref{eq:sf} gives
    \begin{align*}
   f(\bm{y}_t) -f(\bm{x}_{\star}) +\beta\dotprod{\nabla f(\bm{y}_t),\bm{z}_{t-1} -\bm{x}_t } &=    f(\bm{y}_t) -f(\bm{x}_{\star}) -\frac{\beta}{1-\beta}\dotprod{\nabla f(\bm{y}_t),\bm{x}_t -\bm{y}_t } \\
   &=   (1+\lambda)(f(\bm{y}_t) -f(\bm{x}_{\star}))  -\lambda (f (\bm{x}_t) -f(\bm{x}_{\star}))\\
   & \quad +\lambda \big( f (\bm{x}_t) - f(\bm{y}_t) -\dotprod{\nabla f(\bm{y}_t),\bm{x}_t -\bm{y}_t } \big) .
\end{align*}
\end{proof}

\begin{lemma} \label{lem:telescoped}
    Let $c_t =1/(t+1)$. Initializing $\bm{z}_{-1} = \bm{x}_0$, it follows that
\begin{align}
   \EE{}{\norm{\bm{z}_{t} - \bm{x}_{\star}}^2 }
    & \le 
 \norm{\bm{x}_{0} - \bm{x}_{\star}}^2  \label{eq:recur-after-cond-exp-t-tele} \\
  & -\frac{\big((t+1)\E{f(\bm{x}_t) -f(\bm{x}_{\star})}  
+\lambda \sum_{k=0}^t \E{B_{f} (\bm{x}_k,\bm{y}_k)}\big)_+^2}{\sum_{k=0}^t\E{\norm{\nabla f(\bm{y}_k,\bm{\zeta}_k)}^2} }.\nonumber 
\end{align}
\end{lemma}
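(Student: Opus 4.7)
The plan is to unroll the one-step descent of Lemma~\ref{lem:boundediterates}, collapse the resulting sum with Titu's inequality (Lemma~\ref{lem:titu}), and then lower bound the aggregated numerator by $(t+1)(f(\bm{x}_t)-f(\bm{x}_{\star})) + \lambda\sum_k B_f(\bm{x}_k,\bm{y}_k)$ via a hidden convex combination between $\bm{x}_k$ and $\bm{y}_{k+1}$ that is specific to the choice $c_t = 1/(t+1)$.

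First, writing $N_k \coloneqq f(\bm{y}_k) - f(\bm{x}_{\star}) + \beta\dotprod{\nabla f(\bm{y}_k), \bm{z}_{k-1}-\bm{x}_k}$, I would iterate inequality~\eqref{eq:tesntoeisnt2E} from $k=0$ up to $t$, using $\bm{z}_{-1}=\bm{x}_0$, and apply the numerical Titu inequality~\eqref{eq:titu_numbers} to obtain
\begin{equation*}
\E{\|\bm{z}_t-\bm{x}_{\star}\|^2} \;\le\; \|\bm{x}_0-\bm{x}_{\star}\|^2 \;-\; \frac{\bigl(\sum_{k=0}^t \E{N_k}\bigr)_+^2}{\sum_{k=0}^t \E{\|\nabla f(\bm{y}_k,\bm{\zeta}_k)\|^2}}.
\end{equation*}
Then the Bregman identity~\eqref{eq:bregmanview} lets me rewrite
\begin{equation*}
\sum_{k=0}^t \E{N_k} \;=\; (1+\lambda)\sum_{k=0}^t\E{f(\bm{y}_k)-f(\bm{x}_{\star})} \;-\; \lambda\sum_{k=0}^t\E{f(\bm{x}_k)-f(\bm{x}_{\star})} \;+\; \lambda\sum_{k=0}^t\E{B_f(\bm{x}_k,\bm{y}_k)}.
\end{equation*}

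The crux is then the purely deterministic bound $(1+\lambda)\sum_{k=0}^t f(\bm{y}_k) - \lambda\sum_{k=0}^t f(\bm{x}_k) \ge (t+1)f(\bm{x}_t)$. The key observation is that eliminating $\bm{z}_k$ between the update $\bm{x}_{k+1}=\tfrac{k+1}{k+2}\bm{x}_k+\tfrac{1}{k+2}\bm{z}_k$ (from~\eqref{eq:x-sfsgd} with $c_{k+1}=1/(k+2)$) and the relation $\bm{z}_k=(1+\lambda)\bm{y}_{k+1}-\lambda\bm{x}_{k+1}$ (solving~\eqref{eq:y-sfsgd} for $\bm{z}_k$) produces
\begin{equation*}
(k+2+\lambda)\,\bm{x}_{k+1} \;=\; (k+1)\,\bm{x}_k \;+\; (1+\lambda)\,\bm{y}_{k+1},
\end{equation*}
which is a genuine convex combination since both coefficients are nonnegative and sum to one. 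Applying convexity of $f$ and rearranging gives the one-step inequality $(k+2)f(\bm{x}_{k+1})-(k+1)f(\bm{x}_k) \le (1+\lambda)f(\bm{y}_{k+1})-\lambda f(\bm{x}_{k+1})$, whose left-hand side telescopes from $k=0$ to $k=t-1$ into $(t+1)f(\bm{x}_t)-f(\bm{x}_0)$. The missing $k=0$ term on the right is supplied exactly by the initialization $\bm{z}_{-1}=\bm{x}_0$, which forces $\bm{y}_0=\bm{x}_0$ and hence $(1+\lambda)f(\bm{y}_0)-\lambda f(\bm{x}_0)=f(\bm{x}_0)$.

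Combining these ingredients, the numerator $\sum_{k=0}^t \E{N_k}$ is bounded below by $(t+1)\E{f(\bm{x}_t)-f(\bm{x}_{\star})} + \lambda\sum_{k=0}^t \E{B_f(\bm{x}_k,\bm{y}_k)}$, which is nonnegative since $\bm{x}_{\star}$ minimizes $f$ and Bregman divergences are nonnegative; consequently $(\cdot)_+$ preserves the inequality when both sides are squared, yielding~\eqref{eq:recur-after-cond-exp-t-tele}. I expect the main obstacle to be spotting the convex-combination identity $(k+2+\lambda)\bm{x}_{k+1}=(k+1)\bm{x}_k+(1+\lambda)\bm{y}_{k+1}$, which is precisely what makes the $(1+\lambda)f(\bm{y}_k)-\lambda f(\bm{x}_k)$ combination produced by the Bregman identity telescopically compatible with the desired $(t+1)f(\bm{x}_t)$ anchor; once this identity is in hand, the remaining algebra is routine.
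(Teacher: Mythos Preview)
Your proposal is correct and follows essentially the same route as the paper: unroll~\eqref{eq:tesntoeisnt2E}, apply Titu's inequality, invoke the Bregman identity~\eqref{eq:bregmanview}, and then telescope via the convex-combination identity relating $\bm{x}_{k+1}$, $\bm{x}_k$, and $\bm{y}_{k+1}$. Your identity $(k+2+\lambda)\bm{x}_{k+1}=(k+1)\bm{x}_k+(1+\lambda)\bm{y}_{k+1}$ is exactly the paper's $\bm{x}_t=\tfrac{1}{1+\rho_t}\bm{y}_t+\tfrac{\rho_t}{1+\rho_t}\bm{x}_{t-1}$ rescaled by $(1+\lambda)$ (since $(1+\lambda)\rho_t=t$ when $c_t=1/(t+1)$); your elimination of $\bm{z}_k$ is a slightly more direct derivation of it, but the convexity step and telescoping that follow are identical.
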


\begin{proof}
Using~\eqref{eq:bregmanview} in~\eqref{eq:tesntoeisnt2E} gives
\begin{align}
   \E{\norm{\bm{z}_{t} - \bm{x}_{\star}}^2 }
    & = 
  \E{\norm{\bm{z}_{t-1} - \bm{x}_{\star}}^2}  \nonumber\\
  &-\frac{\big(\E{(1+\lambda)(f(\bm{y}_t) -f(\bm{x}_{\star}))  
-\lambda (f (\bm{x}_t) -f(\bm{x}_{\star}))+\lambda  B_{f} (\bm{x}_t,\bm{y}_t)}\big)_+^2}{\E{\norm{\nabla f(\bm{y}_t,\bm{\zeta}_t)}^2} }.\label{eq:recur-after-cond-exp} 
\end{align}
Therefore, unrolling~\eqref{eq:recur-after-cond-exp} gives
\begin{align*}
   \E{\norm{\bm{z}_{t} - \bm{x}_{\star}}^2 }
    & \le 
  \norm{\bm{z}_{-1} - \bm{x}_{\star}}^2  -\sum_{k=0}^{t}\frac{\big(a_k\big)_+^2}{b_k},
\end{align*}
where we define $a_k\coloneqq \E{(1+\lambda)(f(\bm{y}_k) -f(\bm{x}_{\star}))  
-\lambda (f(\bm{x}_k) - f(\bm{x}_{\star}))+\lambda  B_{f} (\bm{x}_k,\bm{y}_k)}$ and $b_k = \E{\norm{\nabla f(\bm{y}_k,\bm{\zeta}_k)}^2}$. From Lemma~\ref{lem:titu}, we know that
\[\sum_{k=0}^{t} \frac{(a_k)_+^2}{b_k} \geq \frac{\big(\sum_{k=0}^{t}a_k\big)_+^2}{\sum_{k=0}^{t}b_k}.\]
 
Therefore, we have that
\begin{align}
   \EE{}{\norm{\bm{z}_{t} - \bm{x}_{\star}}^2 }
    & = 
 \norm{\bm{z}_{-1} - \bm{x}_{\star}}^2  \label{eq:recur-after-cond-exp-t} \\
  & -\frac{\big(\sum_{k=0}^t\E{(1+\lambda)(f(\bm{y}_k) - f(\bm{x}_{\star}))  
-\lambda (f(\bm{x}_k) - f(\bm{x}_{\star}))+\lambda  B_{f} (\bm{x}_k,\bm{y}_k)}\big)_+^2}{\sum_{k=0}^t\E{\norm{\nabla f(\bm{y}_k,\bm{\zeta}_k)}}^2 }.\nonumber 
\end{align}
 
To finish the proof of convergence, we need to write $\bm{y}_t$ as a combination of $\bm{x}_t$ and $\bm{x}_{t-1}$ so that we can telescope. To this end note that
\[ \bm{z}_{t-1} = \frac{1}{c_t}\bm{x}_t +\left(1 - \frac{1}{c_t} \right) \bm{x}_{t-1}.\]
Substituting this into the $\bm{y}_t$ update in \ref{eq:sf} gives
\begin{align*}
     \bm{y}_t &= (1-\beta) \left( \frac{1}{c_t}\bm{x}_t +\left(1 - \frac{1}{c_t} \right) \bm{x}_{t-1}\right) + \beta \bm{x}_t \\
     & =  \left( (1-\beta)\left(\frac{1}{c_t} -1\right) +1\right)\bm{x}_t -(1-\beta)\left( \frac{1}{c_t}-1 \right) \bm{x}_{t-1}.
     \end{align*}
Let $\rho_t := (1-\beta)\left( \frac{1}{c_t} -1\right).$ Isolating $\bm{x}_t$ in the above we have that it can be expressed as a convex combination between $\bm{y}_t$ and $\bm{x}_{t-1}$ given by 
\begin{equation}\label{eq:yttoxts}
    \bm{x}_t = \frac{1}{1+\rho_t}\bm{y}_t + \frac{\rho_t}{1+\rho_t} \bm{x}_{t-1}.
\end{equation} 
Using the convexity of $f$ we have that 
\begin{equation}\label{eq:convexytlosspre}
    f(\bm{x}_t) \leq  \frac{1}{1+\rho_t}f(\bm{y}_t) + \frac{\rho_t}{1+\rho_t} f(\bm{x}_{t-1}).
\end{equation} 
Re-arranging and isolating $f(\bm{y}_t)$ gives
\begin{equation}\label{eq:convexytloss} f(\bm{y}_t)  \geq (1+\rho_t)f(\bm{x}_t)  -\rho_t f(\bm{x}_{t-1}). \end{equation} 
Using the above we have that
\begin{align*}
    (1+\lambda)(f(\bm{y}_t) -f(\bm{x}_{\star}))  
-\lambda (f(\bm{x}_t) - f(\bm{x}_{\star})) &\geq 
 (1+\lambda)(1+\rho_t)(f(\bm{x}_t) -f(\bm{x}_{\star}))  \\
 & - (1+\lambda)\rho_t (f(\bm{x}_{t-1})-f(\bm{x}_{\star}))) \\
 & 
-\lambda (f(\bm{x}_t) - f(\bm{x}_{\star})) \\
&= (1+(1+\lambda)\rho_t )(f(\bm{x}_t) -f(\bm{x}_{\star})) \\
 & - (1+\lambda)\rho_t (f(\bm{x}_{t-1})-f(\bm{x}_{\star})))
\end{align*}
Substituting back $\rho_t := (1-\beta)\left( \frac{1}{c_t} -1\right) $ and 
$1+\lambda =\frac{1}{1-\beta} $ in the above and using that $c_t = 1/(t+\frac{1}{c_0})$ gives
\begin{align*}
    (1+\lambda)&(f(\bm{y}_t) -f(\bm{x}_{\star}))  
-\lambda (f (\bm{x}_t) -f(\bm{x}_{\star})) \\
&\geq \left(\frac{1}{c_t} \right)  (f(\bm{x}_t) - f(\bm{x}_{\star})) - \left(\frac{1}{c_t}-1 \right) (f(\bm{x}_{t-1})-f(\bm{x}_{\star})) \\
&= \left( t + \frac{1}{c_0}\right)(f(\bm{x}_t) - f(\bm{x}_{\star})) - \left( t-1 + \frac{1}{c_0}\right)(f(\bm{x}_{t-1})-f(\bm{x}_{\star})).
\end{align*}
Using the above we have that
\begin{align*}
    \sum_{k=0}^t&\big((1+\lambda)(f(\bm{y}_k) -f(\bm{x}_{\star}))  
-\lambda (f(\bm{x}_k) - f(\bm{x}_{\star}))\big) \\
& \geq f(\bm{x}_0) - f(\bm{x}_{\star}) +
  \sum_{k=1}^t\Bigg( \Big(k+\frac{1}{c_0}\Big)  (f(\bm{x}_k) -f(\bm{x}_{\star})) - \Big(k-1+\frac{1}{c_0}\Big)(f(\bm{x}_{k-1})-f(\bm{x}_{\star})) \Bigg) \\
  &=  f(\bm{x}_0) - f(\bm{x}_{\star}) + \left(t+\frac{1}{c_0}\right) (f(\bm{x}_t) -f(\bm{x}_{\star})) - \frac{1}{c_0} (f(\bm{x}_0) - f(\bm{x}_{\star})) \\
  &= (t+1)(f(\bm{x}_t) -f(\bm{x}_{\star})).
\end{align*} 
Inserting this in~\eqref{eq:recur-after-cond-exp-t}, together with the monotonicity of the positive part and the initialization that $\bm{z}_{-1}=\bm{x}_0$, gives
\begin{align}
   \EE{}{\norm{\bm{z}_{t} - \bm{x}_{\star}}^2 }
    & = 
 \norm{\bm{x}_{0} - \bm{x}_{\star}}^2  \label{eq:recur-after-cond-exp-t-tele-proof} \\
  & -\frac{\big((t+1)\E{f(\bm{x}_t) -f(\bm{x}_{\star})}  
+\lambda \sum_{k=0}^t \E{B_{f} (\bm{x}_k,\bm{y}_k)}\big)_+^2}{\sum_{k=0}^t\E{\norm{\nabla f(\bm{y}_k,\bm{\zeta}_k)}^2} }.\nonumber 
\end{align}
\end{proof}

\subsection{Proof of \Cref{thm:oracle} (Oracle Step Size)}

\begin{proof}
    Since $\|\bm{y}_k -\bm{x}_{\star}\| \leq \|\bm{x}_0-\bm{x}_{\star}\|$ we have that $\E{\norm{\nabla f(\bm{y}_k,\bm{\zeta}_k)}^2} \leq G^2$ and re-arranging~\eqref{eq:recur-after-cond-exp-t-tele}  gives
    \begin{align*}
     \big((t+1)\E{f(\bm{x}_t) - f(\bm{x}_{\star})}  
    +\lambda \sum_{k=0}^t \E{B_{f} (\bm{x}_k,\bm{y}_k)}\big)_+^2
        & \leq  
     G^2(t+1)(\norm{\bm{x}_{0} - \bm{x}_{\star}}^2  -  {\norm{\bm{z}_{t} - \bm{x}_{\star}}^2 } ) \\
     & \leq  G^2(t+1)\norm{\bm{x}_{0} - \bm{x}_{\star}}^2.
    \end{align*}
    Since the term on the left is always positive we can drop the positive part, taking square roots, and dividing through by $t+1$ gives
    \begin{align*}
       \E{f(\bm{x}_t) - f(\bm{x}_{\star})}  
    +\frac{\lambda}{ t+1} \sum_{k=0}^t \E{B_{f} (\bm{x}_k,\bm{y}_k)} \leq \frac{G\norm{\bm{x}_{0} - \bm{x}_{\star}}}{\sqrt{t+1}}.
    \end{align*}
    Inserting back $\lambda = \beta/(1-\beta)$ gives
    \begin{align*}
         \E{f(\bm{x}_t) - f(\bm{x}_{\star})}   \leq  \E{f(\bm{x}_t) - f(\bm{x}_{\star})} 
    +\frac{1}{ t+1} \frac{\beta}{1-\beta}\sum_{k=0}^t \E{B_{f} (\bm{x}_k,\bm{y}_k)} \leq \frac{G\norm{\bm{x}_{0} - \bm{x}_{\star}}}{\sqrt{t+1}}.
    \end{align*}
    Finally, we can drop the positive terms given by the Bregman divergences $\E{B_{f} (\bm{x}_k,\bm{y}_k)}$, giving the final desired result.
\end{proof}

\subsection{Proof of \Cref{thm:safe} (Safeguarded Step Size)}
\begin{proof}
    We have 
    \begin{align*}
        &\quad\|\bm{z}_t-\bm{x}_*\|^2-\|\bm{z}_{t-1}-\bm{x}_*\|^2\\
        &\leq-2\gamma_t\left[f_{\bm{\zeta}_t}(\bm{y}_t)-f_{\bm{\zeta}_t}(\bm{x}_*)+\beta\langle\nabla f_{\bm{\zeta}_t}(\bm{y}_t),\bm{z}_{t-1}-\bm{x}_t\rangle\right]+\gamma_t^2\|\nabla f_{\bm{\zeta}_t}(\bm{y}_t)\|^2\\
        &=-\frac{2[f_{\bm{\zeta}_t}(\bm{y}_t)-\ell_{\bm{\zeta}_t}^*+\beta\langle\nabla f_{\bm{\zeta}_t}(\bm{y}_t),\bm{z}_{t-1}-\bm{x}_t\rangle]_+[f_{\bm{\zeta}_t}(\bm{y}_t)-f_{\bm{\zeta}_t}(\bm{x}_*)+\beta\langle\nabla f_{\bm{\zeta}_t}(\bm{y}_t),\bm{z}_{t-1}-\bm{x}_t\rangle]}{\max\{\|\nabla f_{\bm{\zeta}_t}(\bm{y}_t)\|^2,M\}}\\
        &\quad+\frac{[f_{\bm{\zeta}_t}(\bm{y}_t)-\ell_{\bm{\zeta}_t}^*+\beta\langle\nabla f_{\bm{\zeta}_t}(\bm{y}_t),\bm{z}_{t-1}-\bm{x}_t\rangle]_+^2}{(\max\{\|\nabla f_{\bm{\zeta}_t}(\bm{y}_t)\|^2,M\})^2}\|\nabla f_{\bm{\zeta}_t}(\bm{y}_t)\|^2,\quad q=f_{\bm{\zeta}_t}(\bm{y}_t)+\beta\langle\nabla f_{\bm{\zeta}_t}(\bm{y}_t),\bm{z}_{t-1}-\bm{x}_t\rangle\\
        &=\frac{-2(q-\ell_{\bm{\zeta}_t}^*)_+\cdot(q-f_{\bm{\zeta}_t}(\bm{x}_*))}{\max\{\|\nabla f_{\bm{\zeta}_t}(\bm{y}_t)\|^2,M\}}+\frac{(q-\ell_{\bm{\zeta}_t}^*)_+^2}{\max\{\|\nabla f_{\bm{\zeta}_t}(\bm{y}_t)\|^2,M\}}\cdot\frac{\|\nabla f_{\bm{\zeta}_t}(\bm{y}_t)\|^2}{\max\{\|\nabla f_{\bm{\zeta}_t}(\bm{y}_t)\|^2,M\}}\\
        &\leq\frac{-2(q-\ell_{\bm{\zeta}_t}^*)_+\cdot(q-f_{\bm{\zeta}_t}(\bm{x}_*))}{\max\{\|\nabla f_{\bm{\zeta}_t}(\bm{y}_t)\|^2,M\}}+\frac{(q-\ell_{\bm{\zeta}_t}^*)_+^2}{\max\{\|\nabla f_{\bm{\zeta}_t}(\bm{y}_t)\|^2,M\}}\\
        &=\frac{-2(q-\ell_{\bm{\zeta}_t}^*)_+\cdot(q-f_{\bm{\zeta}_t}(\bm{x}_*))+(q-\ell_{\bm{\zeta}_t}^*)_+^2}{\max\{\|\nabla f_{\bm{\zeta}_t}(\bm{y}_t)\|^2,M\}}\\
        &\overset{(\star)}{\leq}\frac{(f_{\bm{\zeta}_t}(\bm{x}_*)-\ell_{\bm{\zeta}_t}^*)^2-(q-f_{\bm{\zeta}_t}(\bm{x}_*))_+^2}{\max\{\|\nabla f_{\bm{\zeta}_t}(\bm{y}_t)\|^2,M\}}\\
        &=-\frac{(f_{\bm{\zeta}_t}(\bm{y}_t)-f_{\bm{\zeta}_t}(\bm{x}_*)+\beta\langle\nabla f_{\bm{\zeta}_t}(\bm{y}_t),\bm{z}_{t-1}-\bm{x}_t\rangle)_+^2}{\max\{\|\nabla f_{\bm{\zeta}_t}(\bm{y}_t)\|^2,M\}}+\frac{[f_{\bm{\zeta}_t}(\bm{x}_*)-\ell_{\bm{\zeta}_t}^*]^2}{\max\{\|\nabla f_{\bm{\zeta}_t}(\bm{y}_t)\|^2,M\}}\\
        &\overset{(\star\star)}{\leq}-\frac{(f_{\bm{\zeta}_t}(\bm{y}_t)-f_{\bm{\zeta}_t}(\bm{x}_*)+\beta\langle\nabla f_{\bm{\zeta}_t}(\bm{y}_t),\bm{z}_{t-1}-\bm{x}_t\rangle)_+^2}{\max\{G^2,M\}}+\frac{[f_{\bm{\zeta}_t}(\bm{x}_*)-\ell_{\bm{\zeta}_t}^*]^2}{M}\\
    \end{align*}
    Let's explain inequality $(\star)$: Note that $\ell_{\bm{\zeta}_t}^*\leq f_{\bm{\zeta}_t}(\bm{x}_*)$ so 
    $q-\ell_{\bm{\zeta}_t}^*\geq q-f_{\bm{\zeta}_t}(\bm{x}_*)$. Hence if $q-\ell_{\bm{\zeta}_t}^*\leq0$ inequality $(\star)$ 
    reduces to the obvious $0\leq[f_{\bm{\zeta}_t}(\bm{x}_*)-\ell_{\bm{\zeta}_t}^*]^2$. Now assume that 
    $q-\ell_{\bm{\zeta}_t}^*>0$. Then 
    \begin{align*}
        -2(q-\ell_{\bm{\zeta}_t}^*)_+\cdot(q-f_{\bm{\zeta}_t}(\bm{x}_*))+(q-\ell_{\bm{\zeta}_t}^*)_+^2
        &=-2(q-\ell_{\bm{\zeta}_t}^*)(q-f_{\bm{\zeta}_t}(\bm{x}_*))+(q-\ell_{\bm{\zeta}_t}^*)^2\\
        &=(q-\ell_{\bm{\zeta}_t}^*-(q-f_{\bm{\zeta}_t}(\bm{x}_*)))^2-(q-f_{\bm{\zeta}_t}(\bm{x}_*))^2\\
        &=(f_{\bm{\zeta}_t}(\bm{x}_*)-\ell_{\bm{\zeta}_t}^*)^2-(q-f_{\bm{\zeta}_t}(\bm{x}_*))^2\\
        &\leq(f_{\bm{\zeta}_t}(\bm{x}_*)-\ell_{\bm{\zeta}_t}^*)^2-(q-f_{\bm{\zeta}_t}(\bm{x}_*))_+^2.
    \end{align*}
    Inequality $(\star\star)$ follows from $\max\{\|\nabla f_{\bm{\zeta}_t}(\bm{y}_t)\|^2,M\}\geq M$ 
    and $\max\{\|\nabla f_{\bm{\zeta}_t}(\bm{y}_t)\|^2,M\}\leq\max\{G^2,M\}$ because $f_{\bm{\zeta}_t}$ is 
    $G$-Lipschitz. 
    
    Take (double) expectation, use Titu lemma, tower property and 
    Bregman viewpoint to get 
    \begin{align*}
        &\E{\|\bm{z}_t-\bm{x}_*\|^2}-\E{\|\bm{z}_{t-1}-\bm{x}_*\|^2}\\
        &\leq-\frac{\E{(f(\bm{y}_t)-f(\bm{x}_*)+\beta\langle\nabla f(\bm{y}_t),\bm{z}_{t-1}-\bm{x}_t\rangle)_+^2}}
        {\max\{G^2,M\}}+\frac{\sigma^4}{M}\\
        &=-\frac{\E{((1+\lambda)(f(\bm{y}_t)-f(\bm{x}_*))-\lambda(f(\bm{x}_t)-f(\bm{x}_*))+\lambda B_f(\bm{x}_t,\bm{y}_t))_+^2}}
        {\max\{G^2,M\}}+\frac{\sigma^4}{M},
    \end{align*}
    where $\lambda=\beta/(1-\beta)$, so 
    \begin{align*}
        &\E{((1+\lambda)(f(\bm{y}_t)-f(\bm{x}_*))-\lambda(f(\bm{x}_t)-f(\bm{x}_*))+\lambda B_f(\bm{x}_t,\bm{y}_t))_+^2}\\
        &\leq \max\{G^2,M\}\E{\|\bm{z}_{t-1}-\bm{x}_*\|^2}-\max\{G^2,M\}\E{\|\bm{z}_t-\bm{x}_*\|^2}+
        \frac{\max\{G^2,M\}}{M}\sigma^4.
    \end{align*}
    Now let $\Delta_t=(1+\lambda)(f(\bm{y}_t)-f(\bm{x}_*))-\lambda(f(\bm{x}_t)-f(\bm{x}_*))+\lambda B_f(\bm{x}_t,\bm{y}_t)$, sum 
    for $t=0,\dots,T-1$ and use Jensen to get 
    \begin{align*}
        &\quad\frac{\max\{G^2,M\}\|\bm{x}_0-\bm{x}_*\|^2}{T}+\frac{\max\{G^2,M\}}{M}\sigma^4\\
        &\geq\frac{1}{T}\sum_{t=0}^{T-1}\E{(\Delta_t)_+^2}\\
        &\geq\left(\frac{1}{T}\sum_{t=0}^{T-1}\E{\Delta_t}\right)_+^2,
    \end{align*}
    which means that 
    \begin{align*}
        \left(\frac{1}{T}\sum_{t=0}^{T-1}\E{\Delta_t}\right)_+
        &\leq\sqrt{\frac{\max\{G^2,M\}\|\bm{x}_0-\bm{x}_*\|^2}{T}+\frac{\max\{G^2,M\}}{M}\sigma^4}\\
        &\leq\frac{\sqrt{\max\{G^2,M\}}\|\bm{x}_0-\bm{x}_*\|}{\sqrt{T}}+\sqrt{\frac{\max
        \{G^2,M\}}{M}}\sigma^2.
    \end{align*}
    Now by algebraic manipulations we have $\bm{x}_t=\frac{1}{1+\rho_t}\bm{y}_t+\frac{\rho_t}
    {1+\rho_t}\bm{x}_{t-1}$, using convexity and solving for $f(\bm{y}_t)$ we get 
    \begin{align*}
        f(\bm{y}_t)\geq(1+\rho_t)f(\bm{x}_t)-\rho_tf(\bm{x}_{t-1}),
    \end{align*}
    where $\rho_t=(1-\beta)(\frac{1}{c_t}-1)$. Now we have 
    \begin{align*}
        &(1+\lambda)[f(\bm{y}_t)-f(\bm{x}_*)]-\lambda[f(\bm{x}_t)-f(\bm{x}_*)]+\lambda B_f(\bm{x}_t,\bm{y}_t)\\
        &\geq(t+1)[f(\bm{x}_t)-f(\bm{x}_*)]-t[f(\bm{x}_{t-1})-f(\bm{x}_*)]+\lambda B_f(\bm{x}_t,\bm{y}_t),
    \end{align*}
    thus 
    \begin{align*}
        \frac{1}{T}\sum_{t=0}^{T-1}\E{\Delta_t}&=\frac{1}{T}\sum_{t=0}^{T-1}\E{
        (1+\lambda)(f(\bm{y}_t)-f(\bm{x}_*))-\lambda(f(\bm{x}_t)-f(\bm{x}_*))+\lambda B_f(\bm{x}_t,\bm{y}_t)}\\
        &\geq\frac{1}{T}\E{\sum_{t=0}^{T-1}(t+1)(f(\bm{x}_t)-f(\bm{x}_*))-t(f(\bm{x}_{t-1})-
        f(\bm{x}_*))+\lambda B_f(\bm{x}_t,\bm{y}_t)}\\
        &=\E{f(\bm{x}_{T-1})-f(\bm{x}_*)}+\frac{\lambda}{T}\sum_{t=0}^{T-1}\E{B_f(\bm{x}_t,\bm{y}_t)}\\
        &\geq0.
    \end{align*}
    Hence 
    \begin{align*}
        \left(\frac{1}{T}\sum_{t=0}^{T-1}\E{\Delta_t}\right)_+\geq\E{f(\bm{x}_{T-1})-f(\bm{x}_*)}
        +\frac{\lambda}{T}\sum_{t=0}^{T-1}\E{B_f(\bm{x}_t,\bm{y}_t)},
    \end{align*}
    thus 
    \begin{align*}
        \E{f(\bm{x}_{T-1})-f(\bm{x}_*)}+\frac{\beta}{1-\beta}\frac{1}{T}\sum_{t=0}^{T-1}\E{B_f(\bm{x}_t,\bm{y}_t)}
        &\leq\frac{\sqrt{\max\{G^2,M\}}\|\bm{x}_0-\bm{x}_*\|}{\sqrt{T}}+\sqrt{\frac{\max
        \{G^2,M\}}{M}}\sigma^2,
    \end{align*}
    as wanted. 
\end{proof}

\section{Additional Experimental Results}
\label{app:experiments}

\vspace{-3mm}

\subsection{Pseudocodes}

For completeness, we provide the pseudocodes for all the proposed algorithms and step sizes used in our experiments. 

\begin{algorithm}[h]
\begin{algorithmic}[1]
    \caption{\ref{eq:sf} with \ref{eq:sf-ps} or \ref{eq:sf-sps-safe}}
    \State \textbf{Input:} $\bm{z}_{-1}=\bm{x}_0\in\R^d$, $\beta\in[0,1)$, lower bounds $\ell_{\bm{\zeta}_t}^*$, safeguard $M>0$
    \For{$t=0$ \textbf{to} $T-1$}
        \State $\bm{y}_t=(1-\beta)\bm{z}_{t-1}+\beta\bm{x}_t$
        \State Sample $\bm{\zeta}_t$; compute $\nabla f_{\bm{\zeta}_t}(\bm{y}_t)$
        \State \textbf{Option I:} $\displaystyle\gamma_t=\frac{\left[f_{\bm{\zeta}_t}(\bm{y}_t)-f_{\bm{\zeta}_t}(\bm{x}_\star)+\beta\dotprod{\nabla f_{\bm{\zeta}_t}(\bm{y}_t),\bm{z}_{t-1}-\bm{x}_t}\right]_+}{\norm{\nabla f_{\bm{\zeta}_t}(\bm{y}_t)}^2}$ \hfill $\triangleright$ \ref{eq:sf-ps}
        \Statex \hspace{\algorithmicindent}\textbf{or}
        \State  \textbf{Option II:} $\displaystyle\gamma_t=\frac{\left[f_{\bm{\zeta}_t}(\bm{y}_t)-\ell_{\bm{\zeta}_t}^*+\beta\dotprod{\nabla f_{\bm{\zeta}_t}(\bm{y}_t),\bm{z}_{t-1}-\bm{x}_t}\right]_+}{\max\{\norm{\nabla f_{\bm{\zeta}_t}(\bm{y}_t)}^2,M\}}$ \hfill $\triangleright$ \ref{eq:sf-sps-safe}
        \State $\bm{z}_t=\bm{z}_{t-1}-\gamma_t\nabla f_{\bm{\zeta}_t}(\bm{y}_t)$
        \State $\bm{x}_{t+1}=(1-c_{t+1})\bm{x}_t+c_{t+1}\bm{z}_t$, \quad $c_{t+1}=\frac{1}{t+1}$
    \EndFor
    \State \textbf{Return:} $\bm{x}_T$
\end{algorithmic}
\end{algorithm}

\begin{algorithm}[h]
\begin{algorithmic}[1]
    \caption{Schedule-Free Adam with \ref{eq:sf-adam-sps-plus} or \ref{eq:sf-adam-sps-safe}}
    \label{alg:adam-sf-sps}
    \State \textbf{Input:} $\bm{z}_{-1}=\bm{x}_0\in\R^d$, $\beta\in[0,1)$, $\beta_2\in[0,1)$, $\epsilon>0$, lower bounds $\ell_{\bm{\zeta}_t}^*$, safeguard $M>0$
    \State $\bm{v}_{-1}=\bm{0}$
    \For{$t=0$ \textbf{to} $T-1$}
        \State $\bm{y}_t=(1-\beta)\bm{z}_{t-1}+\beta\bm{x}_t$
        \State Sample $\bm{\zeta}_t$; compute $\nabla f_{\bm{\zeta}_t}(\bm{y}_t)$
        \State $\bm{v}_t=\beta_2\bm{v}_{t-1}+(1-\beta_2)\left(\nabla f_{\bm{\zeta}_t}(\bm{y}_t)\right)^2$ \Comment{Element-wise square}
        \State $\mD_t=\operatorname{diag}\!\left(\sqrt{\bm{v}_t/(1-\beta_2^{t+1})}+\epsilon\right)$ \Comment{Adam preconditioner}
        \State \textbf{Option I:} $\displaystyle\gamma_t=\frac{\left[f_{\bm{\zeta}_t}(\bm{y}_t)-f_{\bm{\zeta}_t}(\bm{x}_\star)+\beta\dotprod{\nabla f_{\bm{\zeta}_t}(\bm{y}_t),\bm{z}_{t-1}-\bm{x}_t}\right]_+}{\norm{\nabla f_{\bm{\zeta}_t}(\bm{y}_t)}_{\mD_t^{-1}}^2}$ \hfill $\triangleright$ \ref{eq:sf-adam-sps-plus}
        \Statex \hspace{\algorithmicindent}\textbf{or}
        \State \textbf{Option II:} $\displaystyle\gamma_t=\frac{\left[f_{\bm{\zeta}_t}(\bm{y}_t)-\ell_{\bm{\zeta}_t}^*+\beta\dotprod{\nabla f_{\bm{\zeta}_t}(\bm{y}_t),\bm{z}_{t-1}-\bm{x}_t}\right]_+}{\max\{\norm{\nabla f_{\bm{\zeta}_t}(\bm{y}_t)}_{\mD_t^{-1}}^2,M\}}$ \hfill $\triangleright$ \ref{eq:sf-adam-sps-safe}
        \State $\bm{z}_t=\bm{z}_{t-1}-\gamma_t\mD_t^{-1}\nabla f_{\bm{\zeta}_t}(\bm{y}_t)$
        \State $\bm{x}_{t+1}=(1-c_{t+1})\bm{x}_t+c_{t+1}\bm{z}_t$, \quad $c_{t+1}=\frac{1}{t+1}$
    \EndFor
    \State \textbf{Return:} $\bm{x}_T$
\end{algorithmic}
\end{algorithm}

\vspace{-3mm}
\subsection{Black-Box Distillation details}\label{subsec:black-box-setting}

\begin{table}[h]
\centering
\begin{tabular}{ll}
\toprule
\textbf{Experiment} & \textbf{\texttt{tiny\_shakespeare}} \\
\midrule
Teacher model & \texttt{gpt2-medium} \\
Student hidden size & 768 \\
Student transformer layers & 4 \\
Student attention heads & 8 \\
Student vocabulary size & 50257 \\
Batch size & 4 \\
Context length & 512 tokens \\
Tokens per training step & 4096 \\
Learning rate schedule & constant \\
$(\beta_1,\beta_2)$ & (0.9, 0.999) \\
\bottomrule
\end{tabular}
\caption{Model configuration and training setup for distillation on \texttt{tiny\_shakespeare}.}
\label{tab:distillation_training_config}
\end{table}

Mixed precision training was enabled using \texttt{bfloat16} for efficiency.   The student model utilized flash attention~\citep{dao2022flashattention}.  The full details of the architecture and hyperparameters we used are in~\Cref{tab:distillation_training_config}.

\subsection{Pretraining GPT-2 Decoder Experiment Details}

We follow the GPT-2 architecture~\cite{radford2019language} and train on FineWeb 10B~\cite{penedo2024fineweb}. As in \cite{DYK+24}, gradient clipping is disabled for Schedule-Free and \ref{eq:sf-adam-sps-safe} runs.

\vspace{-3mm}
\paragraph{Architecture.}
\Cref{tab:arch} summarizes the model configurations. All models use flash attention, a vocabulary of 50{,}257 tokens, and a context length of 1{,}024.

\begin{table}[H]
\vspace{-2mm}
\centering
\caption{Model architectures and hardware.}
\label{tab:arch}
\begin{tabular}{lrrr}
\toprule
\textbf{Hyperparameter} & \textbf{GPT-small} & \textbf{GPT-medium}\\
\midrule
Parameters        & 124M   & 355M \\
Num.\ embeddings  & 768    & 1{,}024  \\
Num.\ layers      & 12     & 24 \\
Num.\ heads       & 12     & 16  \\
GPUs              & $4\times$H100 & $4\times$H100 \\
\bottomrule
\end{tabular}
\vspace{-4mm}
\end{table}

\paragraph{Training.}
\Cref{tab:training} lists the common training hyperparameters shared across all models.

\begin{table}[H]
\vspace{-2mm}
\centering
\caption{Common training hyperparameters.}
\label{tab:training}
\begin{tabular}{ll}
\toprule
\textbf{Hyperparameter} & \textbf{Value} \\
\midrule
Dataset                   & FineWeb 10B \\
Total tokens              & 10.25B \\
Context length            & 1{,}024 \\
Global batch size         & 480 sequences (491{,}520 tokens) \\
Per-GPU batch size        & 12 \\
Gradient accum.\ steps    & 10 \\
Optimizer steps           & 20{,}856 \\
Weight decay              & 0.1 \\
Dropout                   & 0.0 \\
Gradient clipping         & 1.0 (AdamW only) \\
\bottomrule
\end{tabular}
\vspace{-4mm}
\end{table}

\paragraph{Optimizers.}

\Cref{tab:optim} lists the optimizer-specific hyperparameters. For AdamW and Schedule-Free Adam we sweep the learning rate over six values and report the best validation loss. The Polyak step size is determined adaptively and requires no LR sweep, we therefore plot the final training and validation losses as a constant horizontal line to compare against the performance of the baselines for all base learning rate settings.

\begin{table}[H]
\vspace{-2mm}
\centering
\caption{Optimizer hyperparameters.}
\label{tab:optim}
\resizebox{\textwidth}{!}{%
\begin{tabular}{lllll}
\toprule
\textbf{Hyperparameter}
  & \textbf{AdamW}
  & \textbf{SF-Adam}
  & \textbf{\ref{eq:sf-adam-sps-safe} (const $M$)}
  & \textbf{\ref{eq:sf-adam-sps-safe} (EMA $M$)} \\
\midrule
$\beta_1$            & 0.9  & 0.9  & 0.9  & 0.9  \\
$\beta_2$            & 0.95 & 0.98 & 0.98 & 0.98 \\
Warmup steps         & --- & 2{,}000 & 2{,}000 & 2{,}000 \\
LR schedule          & linear decay & constant & constant & constant \\
LR sweep             & $\{5\text{e-}6,\;5\text{e-}5,\;5\text{e-}4,\;5\text{e-}3,\;5\text{e-}2,\;5\text{e-}1\}$
                     & $\{5\text{e-}6,\;5\text{e-}5,\;5\text{e-}4,\;5\text{e-}3,\;5\text{e-}2,\;5\text{e-}1\}$
                     & --- & --- \\
$M$                  & ---  & ---  & $10$ & --- \\
$\beta_M$ (EMA coeff.)  & ---  & ---  & --- & 0.99 \\
$\ell_{\zeta_t}^*$  & ---  & ---  & 0.0 & 0.0 \\
\bottomrule
\end{tabular}}
\vspace{-4mm}
\end{table}

\paragraph{Visualization of Polyak Step Sizes}

\Cref{fig:polyak-steps-gptsmall} shows the adaptive step-sizes computed by \ref{eq:sf-adam-sps-safe} over the course of training, compared against the fixed learning rates of AdamW and Schedule-Free AdamW tuned for best final validation loss. The Polyak step-sizes are large early in training when the loss gap is large, and decay naturally as the model converges — recovering a schedule-like decay without any explicit schedule.

\begin{figure}[H]
    \centering
    \begin{subfigure}[b]{0.48\linewidth}
        \includegraphics[width=\linewidth]{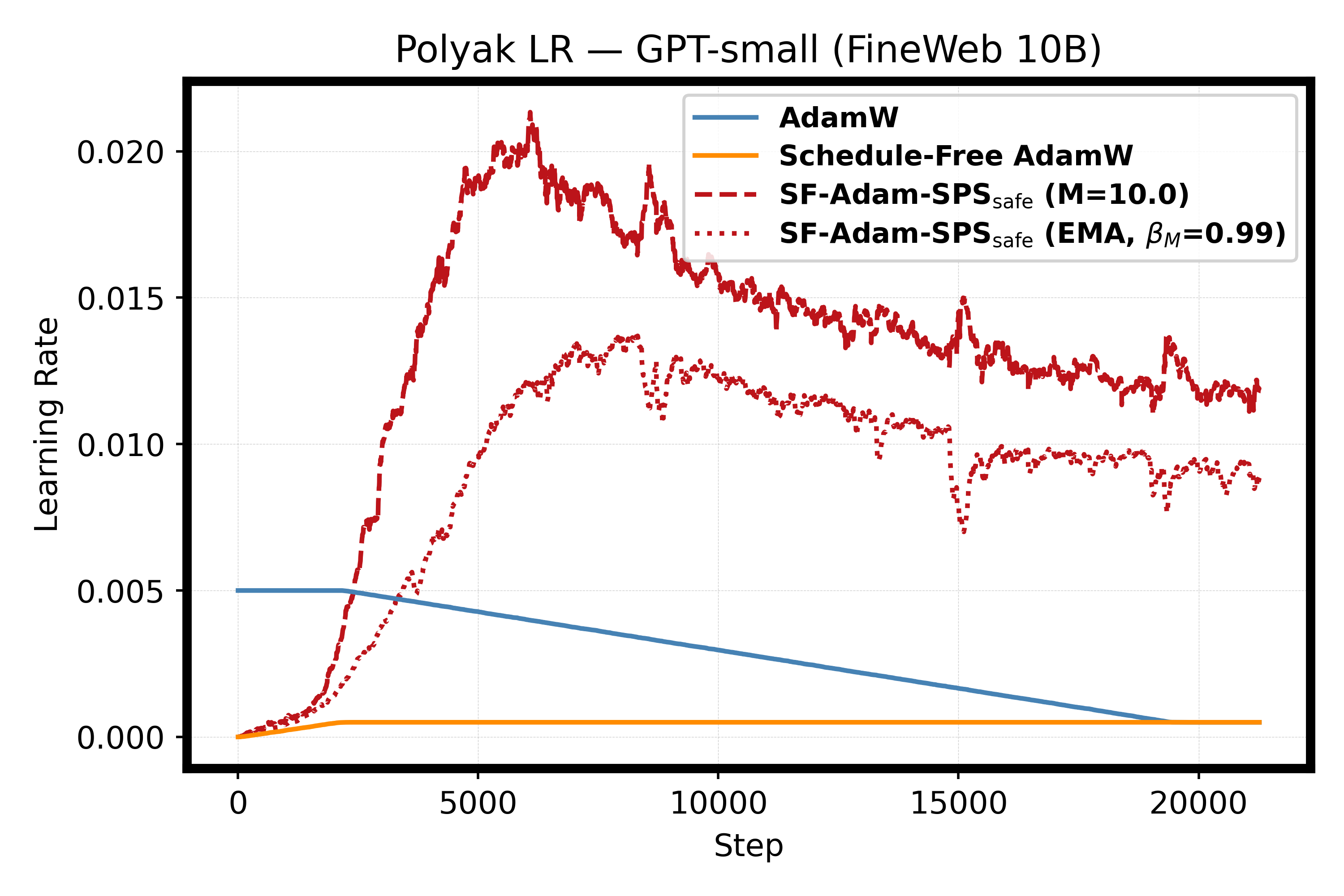}
        \caption{GPT2-small}
        \label{fig:polyak-steps-gptsmall}
    \end{subfigure}
    \hfill
    \begin{subfigure}[b]{0.48\linewidth}
        \includegraphics[width=\linewidth]{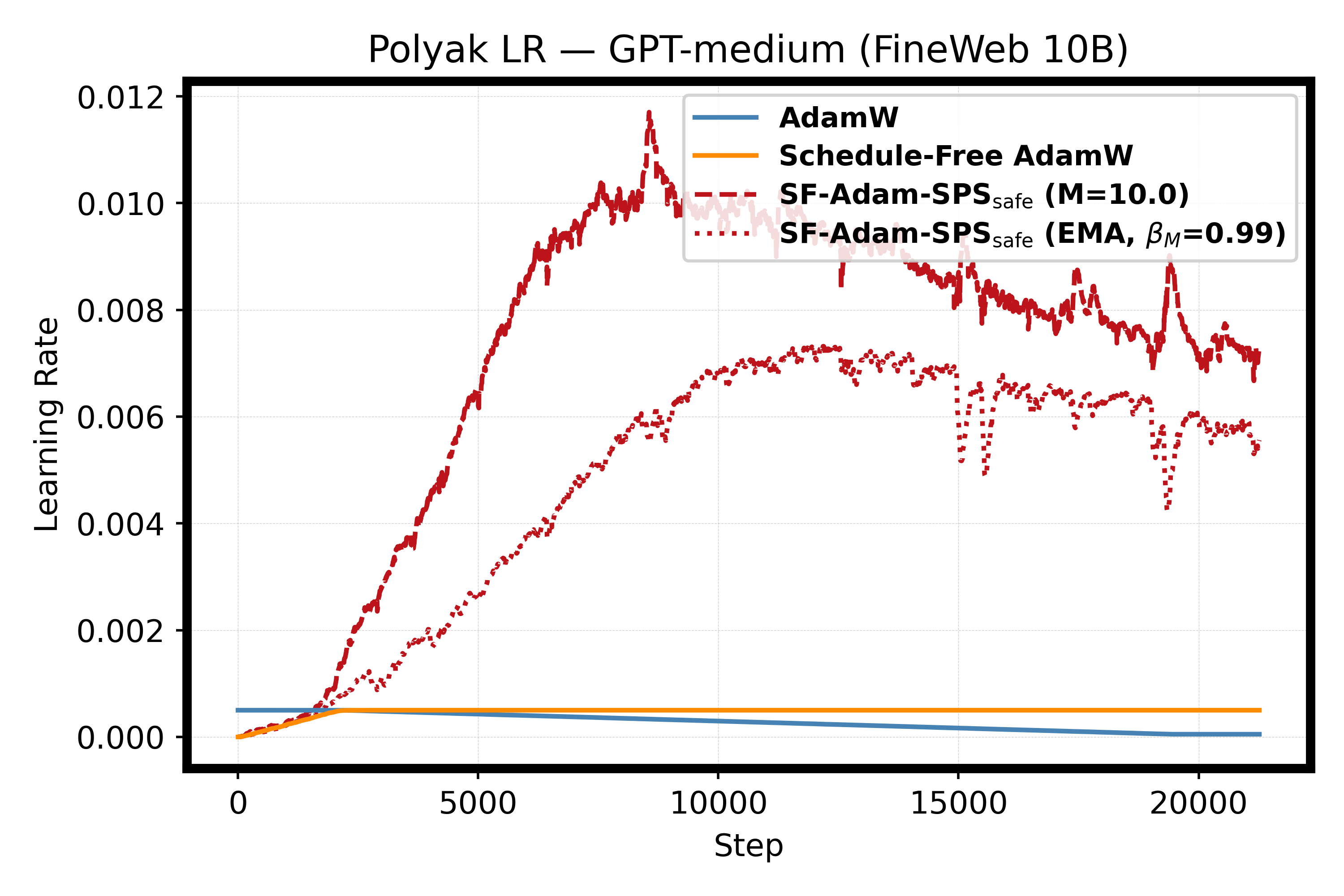}
        \caption{GPT2-medium}
        \label{fig:polyak-steps-gptmedium}
    \end{subfigure}
    \caption{Optimal learning rates computed for \ref{eq:sf-adam-sps-safe}, along with 
    learning rates for AdamW and Schedule-Free AdamW (picked for best final validation loss)}
    \label{fig:polyak-steps}
\end{figure}

\paragraph{Activation of Safeguard $M$}

\Cref{fig:polyak-M-gptsmall} shows $M$ plotted against the preconditioned gradient norms throughout training. When $M$ dominates the denominator the safeguard is active, preventing  excessively large step-sizes. With EMA-based tracking, $M$ adapts to the scale of the gradient norms, activating more selectively than a fixed $M$. As demonstrated by \Cref{fig:lr-stability-gptsmall}, in the underparameterized regime the safeguard is not necessary to improve performance and the two methods for selecting $M$ are comparable in terms of training dynamics and generalization performance.

\begin{figure}[H]
    \centering
    \includegraphics[width=\linewidth]{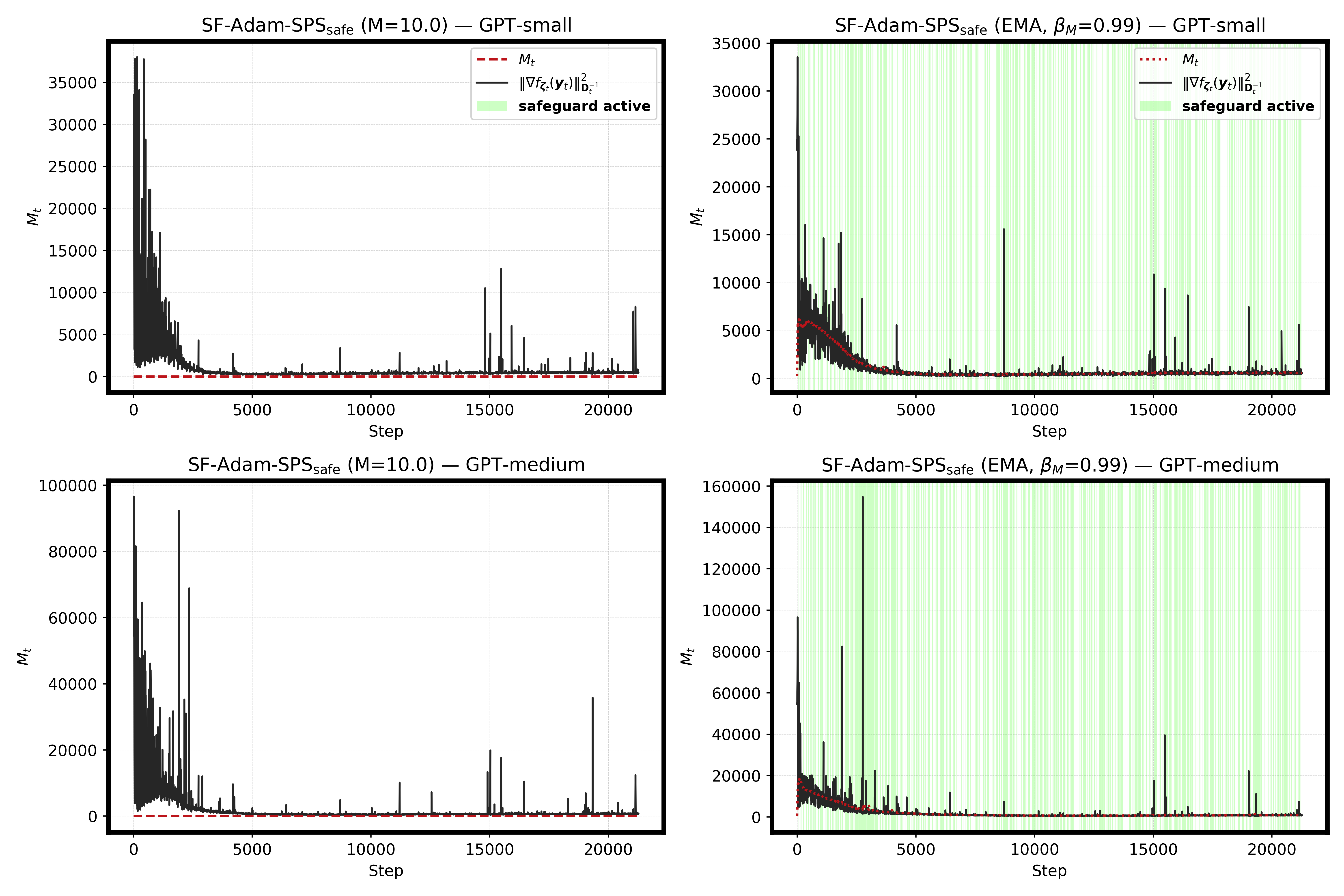}
    \caption{Safeguard parameter $M$ for \ref{eq:sf-adam-sps-safe} plotted against the 
    preconditioned gradient norms. A vertical green line indicates that 
    $M \geq \norm{\nabla f_{\bm{\zeta}_t}(\bm{y}_t)}_{\mD_t^{-1}}^2$}
    \label{fig:polyak-M-gptsmall}
\end{figure}

\end{document}